\theoremstyle{thmstyleone}%
\newtheorem{theorem}{Theorem}
\newcommand{\added}[1]{#1}
\theoremstyle{thmstyletwo}%
\newtheorem{example}{Example}%
\theoremstyle{thmstylethree}%
\newtheorem{definition}{Definition}%
\DeclareMathOperator*{\argmin}{arg\,min}
\renewcommand{\vec}[1]{\boldsymbol{#1}}
\newcommand{\prob}{\mathbb{P}}
\newcommand{\given}{\, | \,}
\title{A calibration test for evaluating set-based epistemic uncertainty representations}
\author{Mira Jürgens \\
	Department of Data Analysis \\ and Mathematical Modeling\\
	Ghent University\\
	\texttt{mira.juergens@ugent.be} \\
	\And
    Thomas Mortier \\ 
	Department of Environment\\
	Ghent University\\
	\texttt{thomasf.mortier@ugent.be} \\
    \And
    Eyke Hüllermeier \\
    Department of Informatics\\
    Munich Center for Machine Learning \\
    LMU Munich \\
    \texttt{eyke@ifi.lmu.de} \\
	\And
    Viktor Bengs \\
    Department of Informatics\\
    Munich Center for Machine Learning \\
    LMU Munich \\
    \texttt{viktor.bengs@ifi.lmu.de} \\
    \And
    Willem Waegeman\\ 
	Department of Data Analysis \\ and Mathematical Modeling\\
	Ghent University\\
	\texttt{willem.waegeman@ugent.be} \\
}
\begin{document}
\maketitle
\begin{abstract}
The accurate representation of epistemic uncertainty is a challenging yet essential task in machine learning. A widely used representation corresponds to convex sets of probabilistic predictors, also known as credal sets. One popular way of constructing these credal sets is via ensembling or specialized supervised learning methods, where the epistemic uncertainty can be quantified through measures such as the set size or the disagreement among  members. In principle, these sets should contain the true data-generating distribution. As a necessary condition for this validity, we adopt the strongest notion of calibration as a proxy. Concretely, we propose a novel statistical test to determine whether there is a convex combination of the set’s predictions that is calibrated in distribution. In contrast to previous methods, our framework allows the convex combination to be instance dependent, recognizing that different ensemble members may be better calibrated in different regions of the input space. Moreover, we learn this combination via proper scoring rules, which inherently optimize for calibration. Building on differentiable, kernel-based estimators of calibration errors, we introduce a nonparametric testing procedure and demonstrate the benefits of capturing instance-level variability on of synthetic and real-world experiments.
\end{abstract}
%
\keywords{uncertainty estimation, calibration, ensembles, credal sets, epistemic uncertainty}
\section{Introduction}\label{sec: introduction}
In supervised machine learning, it has become more and more important not only to have accurate predictors,
but also to provide a reliable quantification of predictive uncertainty, i.e., the learner's uncertainty in the outcome $y \in \mathcal{Y}$ given a query instance $x \in \mathcal{X}$ for which a prediction is sought. 
Predictive uncertainty is often 
divided into \textit{aleatoric} and \textit{epistemic} uncertainty \cite{SENGE201416,hullermeierAleatoricEpistemicUncertainty2019,kendallWhatUncertaintiesWe2017,gruber2023sources}, where the former corresponds to uncertainty that cannot be reduced with further information (e.g.\, more training data), as it originates from inherent randomness in the relationship between features $X$ and labels $Y$. 
Therefore, we assume that the \textit{ground truth} is a conditional probability distribution $\prob_{Y \given X}$ on $\mathcal{Y}$,
 i.e.\ given an input sample $x \in \mathcal{X}$, each outcome $y$ has a certain probability to occur,
 given by $\prob_{Y \given X=x}$.
 Even with perfect knowledge about the underlying data-generating process, the outcome cannot be predicted with certainty. However, in a typical machine learning scenario, the learner does not know $\prob_{Y|X}$. Having a space of possible hypotheses, an estimator of the underlying probability distribution $f$ within this space typically consists of a mapping $f: \mathcal{X} \rightarrow \mathbb{P}(\mathcal{Y})$, where $\mathcal{X}$ denotes the feature space and $\mathbb{P}(\mathcal{Y})$ the space of all probability distributions over the target space $\mathcal{Y}$. In essence, epistemic uncertainty refers to the uncertainty about the true $\prob_{Y \given X}$, or the ``gap''  between $\prob_{Y \given X}$ and $f$.
One approach to represent this gap is 
via \textit{second-order probability distributions}, assigning a probability for each of the first-order predicted probability distributions, i.e.\ the candidates for $f$. This is commonly done in Bayesian methods, such as Gaussian processes and Bayesian neural networks \cite{gelmanbda04}, but also in evidential deep learning methods \cite{ulmer2024priorposteriornetworks}. 
However, the former usually involves computationally costly methods to approximate the intractable posterior distribution, while the latter has been criticised for producing unfaithful or unreliable representations of epistemic uncertainty \cite{bengs2022neurips,bengs2023icml,meinert2023unreasonable,juergens2024is}.
An alternative way of representing epistemic uncertainty -- which will be the topic of this paper -- is through \textit{sets} of probability distributions. Such sets are often referred to as \textit{credal sets} \cite{walley1991} in the imprecise probability literature,
and are commonly assumed to be closed and convex \cite{cozman2000credal}. In essence, they are designed 
to represent ignorance, i.e.\ a lack of knowledge about the underlying ground truth, by not committing to one, 
but a set of potential probability distributions.
They can be obtained in a direct manner, as via credal classifiers \cite{hullermeier2022credalunc,javanmardi2024conformalized,wang2025creinns,caprio2024credal,nguyen2022measure},
 which have recently gained popularity, or in an indirect manner, via various types of ensemble methods, such as bootstrapped ensembles, deep ensembles \cite{NIPS2017_lakshminarayan} 
 and randomization techniques based on Monte Carlo dropout \cite{gal2016dropout}.
  In credal sets, epistemic uncertainty is quantified via the size of the credal set  \cite{sale2023volume} 
  or the diversity of the corresponding ensemble \cite{NIPS2017_lakshminarayan}.

Due to a lack of an objective ground-truth, epistemic uncertainty representations are often evaluated in an indirect manner, using downstream tasks such as out-of-distribution detection \cite{ovadia2019nips}, robustness to adversarial attacks \cite{kopetzki2021evaluating}, and active learning \cite{nguyen2022measure}. However, recent studies have raised concerns about the usefulness of such tasks w.r.t.\ epistemic uncertainty evaluation \cite{abe2022deep,meinert2023unreasonable,bengs2022neurips}. A crucial open question is whether existing representations of epistemic uncertainty can be interpreted in a statistically profound way. For the case of credal sets, one might wonder whether such representations are statistically \textit{valid}, i.e., whether a credal set contains with high probability the true underlying conditional target distribution. \added{One way to evaluate this validity criterion is via statistical tests, as exemplary visualized in Figure \ref{fig: setting hypothesis testing}.}  Chao et al.\ \cite{chau2024credal} address this issue by proposing a two-sample test framework for credal sets. However, they assume having a sample of sufficient size from the same underlying conditional distribution. In the "classical" machine learning scenario that we look at, we do not have access to the ground truth conditional distribution nor to more than one realization $(x_i,y_i)$ from it, making this direct evaluation infeasible.

One necessary condition for validity is \textit{calibration} \cite{niculescu2005predicting}, which measures the consistency between predicted probabilities and actual frequencies. In this paper we will utilize in this paper the notion of distribution calibration \cite{vaicenavicius19a} as a surrogate method for  validity. Failing calibration necessarily means failing validity, hence in this case one can be confident that the true data generating distribution is not contained inside the credal set. 
Hence, by introducing a calibration test for set-based epistemic uncertainty representations, we aim for a more direct evaluation than measuring the performance on downstream tasks. 

  Our work builds further upon the work of Mortier et al.\ \cite{mortier2023calibration}, who first proposed the usage of calibration as a proxy for epistemic uncertainty evaluation of credal sets. The test introduced in that study analysed whether, for a given convex set of probabilistic models, there exists a calibrated convex combination in the set. However, this existing test has important limitations, which motivate our current work. Specifically, the original test focuses on determining whether a \textit{single} convex combination of ensemble predictions can achieve calibration, but it does so in an instance-agnostic manner. This limitation means that the approach does not account for the variability of model calibration in dependence on the instance space. Moreover,
  the previous test tries to simulate the distribution of the calibration error estimators under the null hypothesis via sampling predictions in the credal set at random, which possibly leads to an overestimation of how calibrated the found convex combination is.
   In this paper, we address these shortcomings by introducing an instance-dependent approach to calibration testing, with two important modifications: First, due to the instance-dependency of the underlying convex combination of probabilistic predictions, we consider situations where predictions of the set are "differently well" calibrated for different regions in the instance space. Second, by using an optimization-based instead of a sampling-based algorithm, we achieve a more reliable estimate of the calibration statistic under the null hypothesis. Together the two modifications result in better control of the statistical Type I error, while simultaneously increasing the power of the test.


The paper is organized as follows. In Section~2 we review the literature on calibration, and we formally discuss the calibration measures that are used further on. In Section~3 we introduce both the concept 
of instance-dependent validity as well as the (weaker) notion of calibration for credal sets.  We then propose
our novel, nonparametric calibration test that is able to assess whether an epistemic uncertainty representation via credal sets is calibrated. Furthermore, we introduce an optimization algorithm for finding the most calibrated convex combination. It includes training a neural network as a meta-learner, using proper scoring rules as loss functions. In Section 4 we empirically evaluate the statistical Type I and Type II error of our test in different scenarios, thereby showing its improvement over the test proposed by Mortier et al.\ \cite{mortier2023calibration} We further demonstrate its usefulness on validating common ensemble-based epistemic uncertainty representations of models trained on real-world datasets.

\section{Calibration of first-order predictors}
 \label{sec: calibration errors}
\begin{figure}[t]
    \centering
    \includegraphics[width=0.8\linewidth]{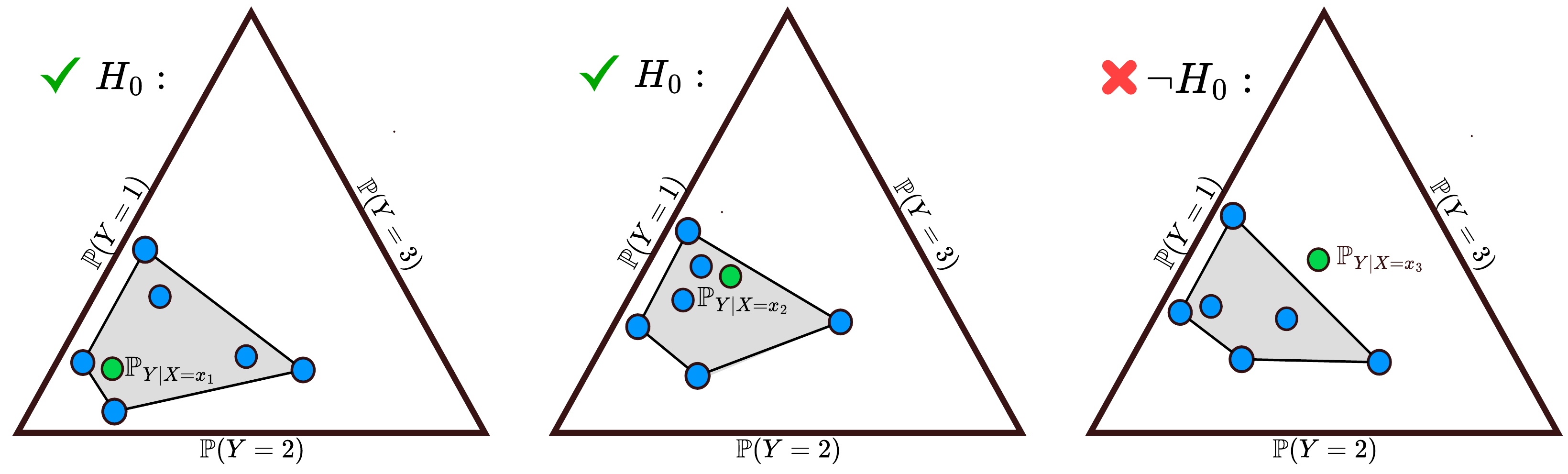}
    \caption{\added{Exemplary illustration of a desired statistical test of validity for the
  the case of $K=3$ classes. Ideally, it should be \textit{instance-dependent}, that is, allow the ground truth to be a convex combination with weights that vary across the instance space.   In the left and middle panel the credal sets (composed by the blue dots)
  is \emph{valid} because the true conditional distribution (green dot) lies
  inside the convex hull. In the right panel the true conditional distribution lies outside the convex hull, hence the null hypothesis should be \textit{rejected}. In this work, we use calibration as an approximation to test validity, that is, we replace $\mathbb{P}_{Y|X}$ with $\mathbb{P}_{Y|f(X)}$ for a probabilistic predictor $f$.}}
    \label{fig: setting hypothesis testing}
\end{figure}
We start with reviewing common calibration metrics that have been introduced for evaluating distribution calibration in multi-class classification. We also discuss differentiable estimators of these metrics, and statistical tests that assess calibration for a \textit{single} probabilistic model. The extension to \textit{sets} of probabilistic models will be discussed in Section~3. 

\subsection{General setting}
We assume a multi-class classification setting with feature space $\mathcal{X} \subseteq \mathbb{R}^d$ and label space $\mathcal{Y}=\{1, \dots, K\}$, consisting of $K$ classes. Let $X$ and $Y$ be random variables that are distributed according to a joint distribution $\mathbb{P}_{X,Y}$ on $\mathcal{X} \times \mathcal{Y}$. A probabilistic model can be represented as $f: \mathcal{X} \rightarrow \mathbb{P}(\mathcal{Y})$, which is a map from the feature space to the space of probability distributions over the output space $\mathcal{Y}$. For multi-class classification problems with $K$ classes, $\mathbb{P}(\mathcal{Y}) = \Delta_K$ (the $(K-1)$-dimensional probability simplex.
Roughly speaking, a classifier is \textit{calibrated} if its outputs coincide with probability distributions that match the empirical frequencies observed from realized outcomes. In multi-class classification, one distinguishes between different types of calibration. \textit{Confidence calibration} \cite{niculescu2005predicting,guo2017,pmlr-v80-kumar18a} only analyses the confidence score, i.e., the probability of the predicted class being calibrated. It is therefore the weakest notion of calibration. \textit{Distribution calibration} \cite{brockerReliabilitySufficiencyDecomposition2009,vaicenavicius19a}, which will be the focus in this paper, analyses whether all class probabilities are calibrated. Instead of only requiring calibrated marginal probabilities as in the definition of \textit{classwise calibration} \cite{zadrozny2002transforming}, it is defined via conditioning on the full probability vector; hence, it is the strongest notion of calibration.

\begin{definition}
\label{def: strong calibration}
A probabilistic multi-class classifier with output vector $f(X) \in \Delta_K$ is \textit{calibrated in distribution}
 if for all $k \in \mathcal{Y}=\{1, \dots, K\}$ it holds that
    $$\mathbb{P}(Y = k | f(X) = \vec{s}) = s_k,$$
with $s_k$ being the $k$-th component of the probability vector $\vec{s} \in \Delta_K$.
\end{definition}

To evaluate calibration, one typically makes use of calibration \textit{errors}. 
In general, these errors measure the (average) discrepancy between the conditional probability distribution $\mathbb{P}_{Y|f(X)}$ and the predictions $f(X)$. Different calibration errors have been introduced to measure distribution calibration. One can in general differentiate between calibration errors defined as an \textit{expectation} over a divergence $d: \Delta_K \times \Delta_K \rightarrow [0, \infty)$ between $f(X)$ and $\mathbb{P}_{Y|f(X)}$ \cite{popordanoska2022consistent,pmlr-v238-popordanoska24a}: \begin{equation}
\label{eq: calibration as expectation}
         \mathrm{CE}_d(f) = \mathbb{E}\Big[d\big(f(X), \mathbb{P}_{Y|f(X)}\big)\Big],
    \end{equation}
and calibration errors defined via \textit{integral probability metrics} \cite{muller1997integral}, also called \textit{kernel calibration errors} \cite{pmlr-v80-kumar18a,widmann2019calibration,marx2024calibration}:
    \begin{equation}
    \label{eq: calibration as divergence}
        \mathrm{CE}_{\mathcal{H}}(f) = \sup_{\phi \in \mathcal{H}}\Big|\mathbb{E} \Big[\phi(f(X), Y)\Big] - \mathbb{E}\Big[\phi(f(X), Z)\Big]\Big|,
    \end{equation}
    where $\mathcal{H}$ is typically chosen to be the unit ball of a reproducing kernel Hilbert space (RKHS) $\mathcal{H}$, and $Z$ is a conditioning variable that is assumed to be calibrated.\\



For a given dataset $\mathcal{D}=\{(x_i, y_i)\}_{i=1}^N$, we denote $\widehat{\mathrm{CE}}(f):= \widehat{\mathrm{CE}}(f, \mathcal{D})$ as an estimator of $\mathrm{CE}$
 based on the dataset $\mathcal{D}$. 

A very common way to estimate the calibration error is to use binning \cite{zadrozny2001,naeini2015,guo2017}, i.e., partitioning the domain of the predictor variable $f(X)$ into a finite number of discrete intervals or \textit{bins}, and estimating the calibration error by aggregating data within each bin. However, the piecewise nature of binning functions makes binned estimators non-differentiable and unsuitable for optimization tasks requiring gradient information, a property that we will need in Section~3. Binning also introduces a bias in estimating the conditional expectation  $\mathbb{E}[Y | f(X)]$,  because it replaces continuous variations with average values within bins. Furthermore, binned estimators usually measure a weaker form of calibration than distribution calibration as defined in Definition \ref{def: strong calibration}, namely classwise calibration, which only demands calibrated marginal probabilities. For these reasons, binning is not discussed in the following overview of existing estimators. Due to the nature of our problem, we make use of calibration estimators that are consistent, (asymptotically) unbiased and differentiable.  \\

\subsection{Overview of calibration errors and estimators}
 One of the first and still widely used metrics \cite{guptacalibration,murphy1973brierdecomposition,rahimi2020intra} that intrinsically also assesses calibration is the (expected) \textit{Brier score} \cite{brier1950}. It is defined as \begin{equation}
    \text{BS}(f) = \mathbb{E}\Big[\Big\|f(X)-\vec{e}_Y \Big\|_2^2 \Big],
\end{equation}
with $\vec{e}_Y$ being the one-hot-encoded vector of $Y$. Its estimator is given by the mean squared error between the predicted probability and actual outcome, averaged over all classes: \begin{equation}
    \widehat{\mathrm{BS}}(f) = \sum_{i=1}^N \sum_{j=1}^K (f_{ij} - \mathbb{I}_{(y_i =j)})^2,
\end{equation}
where $f_{ij}$ denotes the $j$-th entry of the vector $f(x_i)$. Being a proper scoring rule \cite{gneiting2007strictly}, it does not only measure calibration, but can be decomposed into a calibration and a refinement loss term \cite{murphy1973brierdecomposition,kull2015noveldecompositions}, where the calibration loss consists of the expected $L_2$ error between the conditional distribution $P_{X|f(X)}$ and $f(X)$, hence Eq.\ (\ref{eq: calibration as expectation}) with $d$ being the Euclidean distance. In their framework of \textit{proper calibration errors}, \citet{gruber2022betteruncertaintycalibration} show that the Brier score also serves as an upper bound for other common calibration errors.\\

\citet{popordanoska2022consistent} proposed an estimator for the \textit{$L^p$ calibration error}, which for the case $p=2$ directly corresponds to the calibration term in the decomposition of the Brier score: 
\begin{equation}
\label{eq: lp calibration error}
    \mathrm{CE}_p(f) = \Big(\mathbb{E}\Big[\Big\| f(X) - \mathbb{P}_{Y|f(X)}\Big\|_p^p\Big]\Big)^{\frac{1}{p}}.
\end{equation}
They formulate an estimator of $\mathrm{C}\mathrm{E}_{p}(f)$ using Beta and Dirichlet kernel density 
estimates for the binary and multi-class classification case, respectively. Precisely, they prove that 
\begin{equation}
    \widehat{\mathrm{CE}}_p(f)
    =\Big(\frac{1}{n}\sum_{j=1}^{n}\left[\Big\| \widehat{{\mathbb{E}}[y|f(x_j)}] -f(x_{j})\Big\|_{p}^{p}\right]\Big)^{\frac{1}{p}}
\end{equation}
is a point-wise consistent and asymptotically unbiased estimator, where the estimator of the conditional expectation is defined via kernel density estimation.\\


Using the Kullback-Leibler divergence $D_{KL}$ as the divergence measure, 
one can define  
 another proper calibration error \cite{pmlr-v238-popordanoska24a}: \begin{equation}
 \label{eq: kl calibration error}
     \text{CE}_{KL}(f):= \mathbb{E}\Big[D_{KL}(\mathbb{E}[Y|f(X)], f(X))\Big],
 \end{equation}
which exactly forms the calibration error term of the popular log loss. An estimator is given by 
\begin{equation}
    \widehat{\mathrm{CE}}_{KL}(f)
    =\frac{1}{n}\sum_{j=1}^{n}\left\langle \widehat{{\mathbb{E}}[y|f(x_j)}], \log \frac{\mathbb{E}[y|f(x_j)]}{f(x_j)}\right\rangle,
\end{equation}
where the estimator $\widehat{{\mathbb{E}}[y|f(x_j)}]$ is again defined via kernel density estimation.\\


\citet{widmann2019calibration} introduced the \textit{kernel calibration error} and multiple differentiable estimators for it. In its kernel-based formulation, it is defined using a matrix-valued kernel $k: \Delta_K \times \Delta_K \rightarrow \mathbb{R}^{K \times K}$ as follows:
    \begin{eqnarray}
    \label{eq: kernel calibration error}
        \mathrm{CE}_k(f)=\left(\mathbb{E}\left[(\vec{e}_{Y}-f(X))^{\mathsf{T}}k(f(X),f(X^{\prime}))(\vec{e}_{Y^{\prime}}
        -f(X^{\prime}))\right]\right)^{1/2},
    \end{eqnarray}
where $(X', Y')$ is an independent copy of $(X,Y)$ and $\vec{e}_Y$, $\vec{e}_{Y'}$ are the one-hot-encoded vectors of the random variables $Y$ and $Y'$. We will make use of the computationally more feasible unbiased estimator they propose, defined as \begin{equation*}
\widehat{\mathrm{CE}}_k(f) = \frac{1}{\lfloor n/2\rfloor} \sum_{i=1}^{\lfloor n/2 \rfloor} h_{2i-1,2i},
\end{equation*} with $k$ being a universal matrix-valued kernel
and $h_{i,j}$ corresponds to the term in the expectation (\ref{eq: kernel calibration error}) evaluated on two instance-label pairs.\\


\citet{marx2024calibration} introduced a framework of calibration estimators for different types of calibration, which considers it as a distribution matching problem between (true) conditional distribution $\mathbb{P}_{Y|X}$ and the predicted distribution $\hat{\mathbb{P}}_{Y|X}$ induced by $f(X)$. Similar to Widmann et al.\ \cite{widmann2019calibration}, they proposed  integral probability metrics to measure the distance between real conditional and predicted distribution. For $(X, Y) \sim \mathbb{P}$ and $(X, \hat{Y}) \sim \hat{\mathbb{P}}$ they define the calibration error as the Maximum-Mean-Discrepancy (MMD) between $\mathbb{P}$ and $\hat{\mathbb{P}}$:
\begin{equation}
\label{eq: max mean cal error}
    \text{CE}_{MMD}(f) = \sup_{\phi \in \mathcal{H}} \Big|\mathbb{E}\big[\phi(Y, f(X))\big]- \mathbb{E}\big[\phi(\hat{Y}, f(X))\big] \Big|,
\end{equation}
where $\mathcal{H}$ is again the unit ball of an RKHS.
 For the classification case, they introduce a trainable calibration estimate for distribution calibration which measures the squared error as
\begin{equation}
    \widehat{\text{CE}}_{MMD}^2 = \frac{1}{n(n-1)} \sum_{i=1}^n \sum_{j=1, j \neq i}^n h_{ij}, 
\end{equation} where the terms $h_{ij}$ are defined
 dependent on the type of calibration. They define $z_i$ being a conditional random variable distributed according to $f(x_i)$, for $i \in \{1, \dots, N \}$ and
 \begin{eqnarray*}
    h_{ij} = k((y_i, z_i), (y_j, z_j)) + \sum_{y \in \mathcal{Y}}\sum_{y' \in \mathcal{Y}} q_i(y)q_j(y')k((y, z_i), (y', z_j))
    - 2 \sum_{y\in \mathcal{Y}}q_i(y)k((y, z_i), (y_j, z_j)),
 \end{eqnarray*}
 with $k:  \Delta_K \times \Delta_K \rightarrow \mathbb{R}$ being a universal kernel function, and $q_i(y)$ the predicted probability for $y$ given $f(x_i)$.\\


\subsection{Statistical tests for calibration}

As the estimators $\widehat{\mathrm{CE}}(f,\mathcal D)$ reviewed in the previous
section are computed from finite samples, their realised values inherit
sampling variability, and hence comparing them in terms of their realized values does not take this randomness into account. Appendix~\ref{sec: estimator analysis} visualises the
distribution of the estimators under the null hypothesis that they are calibrated; the dispersion
shrinks with the sample size $N$, but even for large $N$ randomness is
non-negligible.  Consequently a formal hypothesis test is needed when
evaluating the calibration of a classifier
$f\!:\mathcal X\!\rightarrow\!\Delta_{K}$ on a finite data set
$\mathcal D=\{(x_i,y_i)\}_{i=1}^{N}$:
\begin{equation*}
H_{0}: \text{$f$ is calibrated} \quad\quad
H_{1}: \neg H_0.
\end{equation*}

There are a number of established tests for calibration in classification problems.
The classical Hosmer-Lemeshow test \cite{hosmer1997comparison}, initially developed as a goodness-of-fit test for the logistic regression model, considers a chi-squared distributed test statistic based on observed and expected frequencies of events. \added{However, as it requires binning and can only be used to test for confidence calibration, we do not include it in this analysis.}
\citet{widmann2019calibration} derive a test for the kernel calibration error
($\text{CE}_k$), which takes the asymptotic normality of their proposed estimator into account.  While it yields analytic $p$-values and is computationally light once the test statistic is calculated, its validity holds only for the specific kernel calibration estimator and \added{its asymptotic normality}. \\

\citet{vaicenavicius19a} developed a general framework to evaluate calibration based on \textit{consistency sampling} \cite{broeckersmith2007}, a method that applies bootstrapping to estimate the distribution of the calibration estimator's values under the null hypothesis that $f$ is calibrated. \added{This is done by resampling new labels in each bootstrap iteration, based on the distribution that $f$ predicts for the respective bootstrap sample. Given the (empirical) distribution function of the calibration estimator under the null hypothesis, one can then check how likely the given calibration estimate is under the assumption that $f$ is calibrated.
Being a nonparametric test, it can therefore be used to test for distribution calibration in combination with any of the calibration estimators mentioned above. For these reasons we adopt the nonparametric bootstrap test of Vaicenavius et al.
\cite{vaicenavicius19a} throughout the paper: it is the only approach
that (a) is valid for all considered calibration errors,
(b) remains distribution-free, and (c) integrates seamlessly with our
proposed algorithm.}

\section{Calibration of sets of probabilistic predictors}
\label{sec: calibration for ensemble models}
We will now come to the main part of this paper, namely the evaluation of
calibration of not only one, but a set of classifier models. To this end, after introducing the necessary methodology, we introduce our proposed test in a step-wise manner.

\subsection{Credal sets} 
\added{\textit{Credal sets}, generally defined as sets of probability distributions, form a way to represent disbelief or uncertainty about the \textit{true} underlying probability distribution. Instead of committing to a single point prediction, using credal sets to represent uncertainty allows to stay imprecise, thereby avoiding prediction error if the uncertainty is too high.}
The sets can be obtained in various ways \cite{wang2025creinns,caprio2024credal}, one direct way being via sets of probabilistic classifiers. In the following, let $\mathcal{F} := \{f^{(1)}, \dots, f^{(M)}\}$, 
with $f^{(i)} : \mathcal{X} \rightarrow \Delta_K$ the $i$-th probabilistic model in a set that contains $M$ models in total.
For each feature vector $x \in \mathcal{X}$, this yields a (credal) set of probability distributions
$\mathcal{F} |_x = \{f^{(1)}(x), \dots, f^{(M)}(x)\} \subseteq \Delta_K$.
A natural way to validate a representation of epistemic uncertainty through sets of predictors is to look at their possible \textit{convex combinations}: If there is at least one prediction in the convex hull of $\mathcal{F}|_x$ that is calibrated, then one can argue that the set of predictors contains the ground truth aleatoric part of the uncertainty, hence fulfils a necessary requirement for representing epistemic uncertainty. Calibration here serves as a relevant necessary condition for validity; a calibrated combination indicates that the predictors are not systematically biased and that they can approximate the true data-generating process in a consistent manner. For each instance $x\in \mathcal{X}$, we now define the (credal) set $\mathcal{S}(\mathcal{F},x)$ as the set of all possible convex combinations of predictors in $\mathcal{F}|_x$.
\begin{definition}
\label{def:credalset}
For a feature vector $x \in \mathcal{X}$, the credal set $\mathcal{S}(\mathcal{F}, x)$ is the set of all convex combinations of $\mathcal{F} |_x$:
\begin{equation*}
\label{eq: credal set}
\mathcal{S}(\mathcal{F}, x) = \Big \{f_{\boldsymbol{\lambda}}(x) \in \mathcal{H} \Big| \, f_{\boldsymbol{\lambda}}(x) = 
    \sum_{i=1}^M \lambda_i(x)f^{(i)}(x) \, \Big| \,(\lambda_1, \dots, \lambda_M) \in \Delta_{M, \mathcal{X}} \Big\}, 
\end{equation*}
where 
    $\Delta_{M, \mathcal{X}} = \Big\{\boldsymbol{\lambda} = (\lambda_1, \dots, \lambda_M) \Big|\, \lambda_i: \mathcal{X} \mapsto [0, 1] \, \text{and}\, \sum_{i=1}^M \lambda_i(x) = 1 \, \forall x \in \mathcal{X} \Big\}$
denotes the set of all functions with co-domain being the $(M-1)$-simplex.
\end{definition}
Here $\Delta_{M,\mathcal X}$ is a function space over $\mathcal X$, while in previous work \cite{mortier2023calibration}, $\Delta_{M}$ was the $(M-1)$-simplex of all (constant) convex combinations, that is, the same convex combination for all $x \in \mathcal{X}$ was analysed in order to test for calibration.
Instance-dependent convex combinations offer the flexibility needed to capture varying degrees of aleatoric and epistemic uncertainty across the input domain. Different regions of the instance space may require different degrees of model blending to appropriately represent the level of uncertainty, particularly when some models are better calibrated than others. Def.~\ref{def:credalset} constructs credal sets via realizations of sets of predictors, which yield sets of probability distributions. From the perspective of imprecise probabilities, one might also define a \textit{credal} predictor, which yields as output a set of probability distributions. The proposed framework also holds for this scenario, given the predictor outputs a \textit{convex set} of extreme points.


\subsection{Validity and calibration for sets of predictions}
\label{sec: validity for credal sets}
In the following, we formally define our notion of validity for set representations of epistemic uncertainty. It implies that for each $x \in \mathcal{X}$, the true underlying probability distribution $\prob_{Y|X=x}$ is contained in the (credal) set of all possible convex combinations $\mathcal{S}(\mathcal{F}, x)$. 

\begin{definition}[Validity of credal sets]
    Let $\mathcal{F}= \{f^{(1)}, \dots, f^{(M)}\}$ be a set of predictors,  with $f^{(i)}: \mathcal{X} \rightarrow \mathbb{P}(\mathcal{Y})$, and for each $x\in \mathcal{X}$, define the induced credal set $\mathcal{F}|_x$ as in Definition \ref{def:credalset}. Then $\mathcal{F}$ is \textit{valid} if
        $\prob_{Y|X=x} \in \mathcal{S}(\mathcal{F}, x)$    for all $x \in \mathcal{X}$.
\end{definition}

In practice, testing validity requires having either access to the ground truth conditional distribution $\mathbb{P}_{Y|X}$ or a sufficient number of samples thereof, as assumed in \cite{chau2024credal}.
Hence, in our setting, we make use of the \textit{weaker} notion of calibration which results from conditioning on the predictions $f(X)$ instead of $X$. This way, we analyse whether there is a $\boldsymbol{\lambda} \in \Delta_{M, \mathcal{X}}$ such that $\mathbb{P}_{Y|f_{\lambda}(x)} \in \mathcal{S}(\mathcal{F}, x)$. Similar as in \cite{mortier2023calibration}, we now define a set of classifiers, or equivalently a credal classifier, as \textit{calibrated} if there \textit{exists} (at least) one calibrated convex combination of predictors. Definition \ref{def: calibration of credal sets} forms a generalization of Definition 4 of \cite{mortier2023calibration}, where the coefficients of the convex combination do not form functions on the instance space, but are \textit{constants}, i.e., $\boldsymbol{\lambda} \in \Delta_M$.

\begin{definition}
\label{def: calibration of credal sets}
    Let $\mathcal{F}=\{f^{(1)}, \dots, f^{(M)}\}$ be a set of predictors, $f^{(i)}: \mathcal{X} \rightarrow \Delta_K$. We say that $\mathcal{F}$ is \textit{calibrated in distribution} if there exists $\boldsymbol{\lambda} \in \Delta_{M, \mathcal{X}}$ such that the \textit{combined predictor} $f_{\boldsymbol{\lambda}} : \mathcal{X} \rightarrow \mathcal{S}(\mathcal{F}, \mathcal{X})$ defined as
    $$f_{\boldsymbol{\lambda}}(x) = \sum_{i=1}^M \lambda_i (x) f^{(i)}(x)$$
is calibrated in distribution (Definition \ref{def: strong calibration}).
\end{definition}

Note that calibration provides a necessary, but not \textit{sufficient} condition for validity: A valid predictor is always calibrated, yet the reverse does not have to hold. In fact, there are (possibly) many calibrated predictors, as shown in \cite{vaicenavicius19a}. In Appendix \ref{sec: many calibrated convex combinations}, we show that this is also the case for convex combinations of probabilistic models. However, if a predictor is \textit{not} calibrated, one knows that it is also not valid.
\subsection{A novel calibration test for sets of predictors}
Our null and alternative hypothesis for testing the calibration of a set of predictors $\mathcal{F}$ can now be formulated as 
\begin{eqnarray}
\label{eq: H0 and H1 credal sets}
    H_0: \quad \exists \boldsymbol{\lambda} \in \Delta_{M, \mathcal{X}} \, \text{s.t.} f_{\boldsymbol{\lambda}} \, \text{is calibrated}, \quad H_1: \neg H_0.
\end{eqnarray}
In order to analyse whether there is a calibrated convex combination, a natural approach is to analyse the one with the lowest calibration error. Hence, 
we define the \textit{minimum calibration error} as follows:
\begin{equation}
\label{eq: minimization lambda}
    \min_{\boldsymbol{\lambda} \in \Delta_{M, \mathcal{X}}}  g(f_{\boldsymbol{\lambda}}),
\end{equation}
where $g$ is a calibration error.\\

For the experiments, we choose $g$ to be equal to the errors proposed in Section~2, i.e., $g \in \{\text{CE}_p, \text{CE}_{KL}, \text{CE}_{MMD},\text{CE}_k\}$. These have under suitable conditions the desirable property that $f$ is calibrated if and only if $\text{CE} = 0$, making them suitable for optimization. Furthermore, the respective estimators described in Section \ref{sec: calibration errors} are both consistent and at least asymptotically unbiased. Having an optimisation dataset $\mathcal{D}_{opt} = \{(x_i, y_i)\}_{i=1}^{\tilde{N}}$, 
one can represent the evaluations of the weight function $\boldsymbol{\lambda}: \mathcal{X}\rightarrow \Delta_{M, \mathcal{X}}$ by an $\tilde{N} \times M$ matrix:
\begin{equation}
   \Lambda =  \begin{pmatrix}
    \lambda_1(x_1) & \dots & \lambda_M(x_1) \\
    \vdots & \ddots & \vdots \\
    \lambda_1(x_{\tilde{N}}) & \dots & \lambda_M(x_{\tilde{N}}) \\

    \end{pmatrix}.
\end{equation}
Hence, for $\hat{g}$ being the (data-dependent) estimator of $g$, finding the \textit{minimum empirical calibration error} can be formulated as follows: 
\begin{equation}
\label{eq: minimum calibration error}
    \min_{\Lambda_{i, j}, \, i \in {1, \dots, M}, j \in {1, \dots, \tilde{N}}}\hat{g}(f_{\Lambda}, \mathcal{D}_{val})
\end{equation}
with $f_{\Lambda}(x_i)=\sum_{m=1}^M \lambda_m(x_i) f^{(m)}(x_i) \in [0,1]^K$, having the constraint $\sum_{j=1}^M \Lambda_{ij} =1$ for all  $j \in \{1, \dots, \tilde{N}\}$.\\

As already described in Section \ref{sec: validity for credal sets}, there are in general many calibrated functions $f$, implying the absence of a unique global minimum for the optimization problem (\ref{eq: combined calibration loss}). Therefore, in the optimisation, we make use of a combined loss function consisting of a proper scoring rule and the respective calibration estimator, weighted by a constant. Proper scoring rules intrinsically optimise for both a calibration error and an accuracy term and provide a stable optimisation. The specific optimisation method is described in Section \ref{sec: finding the most calibrated convex combination}.

\subsection{Algorithmic procedure}
\begin{algorithm}[t]
\caption{Algorithm to test whether there exists a calibrated, instance-dependent convex combination of an ensemble of $M$ classifier models.}
\label{alg: Alg v2.0}
\begin{algorithmic}[1]
    \State \textbf{Input:} validation set including features and labels $\mathcal{D}_{val}= \{(x_i, y_i)\}_{i=1}^N$, instance-wise evaluated point predictors  $\mathcal{F} |_{(x_i)_{i=1}^N} = \big(f^{(1)}(x_i), \dots, f^{(M)}(x_i)\big)_{i=1}^N$, estimator of calibration error $\hat{g}$, confidence level $\alpha$, number of bootstrapping iterations $D \in \mathbb{N}$
    \State \textbf{Output:} boolean stating whether to reject $H_0$, with $H_0$ as in (\ref{eq: H0 and H1 credal sets})
    \State $\Lambda^* \gets \displaystyle \argmin_{\Lambda_{i, j}, \, i \in \{1, \dots, M\}, j \in \{1, \dots, N\}}\hat{g}(f_{\Lambda}, \mathcal{D}_{val})$
    \State $f_{\Lambda^*} \gets \Lambda^* \cdot \mathcal{F}$
    
    \For{$d = 1, \dots, D$}
        \State $\mathcal{D}_d \gets \{\tilde{x}_1, \dots, \tilde{x}_N\}$ bootstrap sample of instances
        \For{$n = 1, \dots, N$}
            \State sample $\tilde{y}_n \in \{1, \dots, K\} \sim \text{Cat}(f_{\Lambda^*}(x_n))$
        \EndFor
        \State $\tilde{\mathcal{D}}_{val} \gets \{(x_i, \tilde{y}_i)\}_{i=1}^N$ \Comment{replace labels of validation set}
        \State $t_{0, d} \gets \hat{g}(f_{\Lambda^*}, \tilde{\mathcal{D}}_{val})$
    \EndFor
    
    \State $q_{1-\alpha} \gets (1-\alpha)$-quantile of empirical CDF $\hat{F}$ of $(t_{0,1}, \dots, t_{0, D})$
    \State $t \gets \hat{g}(f_{\Lambda^*}, \mathcal{D}_{val})$
    \If{$t > q_{1-\alpha}$}
        \State reject $H_0$
    \Else
        \State do not reject $H_0$
    \EndIf
\end{algorithmic}
\end{algorithm}
We now explain in detail our new, adapted version of the statistical test proposed in Mortier at al.\ \cite{mortier2023calibration}, which in turn is based on the bootstrap test introduced by Vaicenavicius et al. \cite{vaicenavicius19a}. 
Algorithm \ref{alg: Alg v2.0} shows the pseudo code of the test, which consists of the following steps:\\

First, using an appropriate optimization algorithm, the per-instance convex combination $\Lambda^*$ with minimal calibration error is found (line 3).
     Using $\Lambda^*$, the predictions of the combined classifier $f_{\Lambda^*}$ are computed (line 4).
    Third, the calibration test of Vaicenavicius et al.\ \cite{vaicenavicius19a} is employed to assess whether the predictions of $f_{\Lambda^*}$ are calibrated (line 5-19). This statistical test is based on bootstrapping  and \textit{consistency resampling}, that is, resampling the labels from the distribution induced by $f_{\Lambda^*}$. This way, one is able to estimate the distribution of the calibration error estimator $\hat{g}$ under the null hypothesis that $f_{\Lambda^*}$ is calibrated (see also Figure \ref{fig: histogram estimators h0} for an example).

 For a given significance level $\alpha$, the test rejects the null hypothesis if the value of the calibration error estimator is higher than the $(1-\alpha)$-quantile of the bootstrapped empirical distribution. In the other case, it cannot reject it.
Algorithm \ref{alg: Alg v2.0} differs from the proposed algorithm in Mortier at al.\ \cite{mortier2023calibration} in two important aspects:
\begin{enumerate}
    \item It allows for the case where the weights for the most calibrated convex combination \textit{depend on the instance space}, i.e., for different regions in it, it accounts for the fact that some predictors might be better calibrated than others and vice versa.
    \item It uses an \textit{optimization} instead of a sampling-based approach: For constructing the distribution under the null hypothesis, the previous test used uniform sampling of the weights. Directly optimizing over the calibration error and performing the test on the found convex combination has two main
    advantages: 
    (a), the uniform sampling of weights used in \cite{mortier2023calibration} does not lead to uniform sampling in the credal set, which we further elaborate on in Appendix \ref{sec: uniform sampling}. By putting emphasis on combinations in the inner of the polytope, it could lead to a biased estimate of the distribution under the null hypothesis. Advanced sampling techniques that could result in a more uniform sampling in the credal sets, like rejection-sampling or triangulizations, are computationally costly. (b), the minimization step automatically "guides" the algorithm towards the region with low calibration errors, avoiding the need to explore the whole credal set.
\added{In Appendix \ref{sec: validity} we show that using a universal approximator to learn the underlying convex combination leads to the test being asymptotically valid, i.e., controlling the Type $1$ error. The optimisation step is explained in more detail in the following section.}
\end{enumerate}

\subsection{A robust way of finding the most calibrated convex combination}
\label{sec: finding the most calibrated convex combination}
\begin{figure}[t]
    \centering
    \includegraphics[width=.9\textwidth]{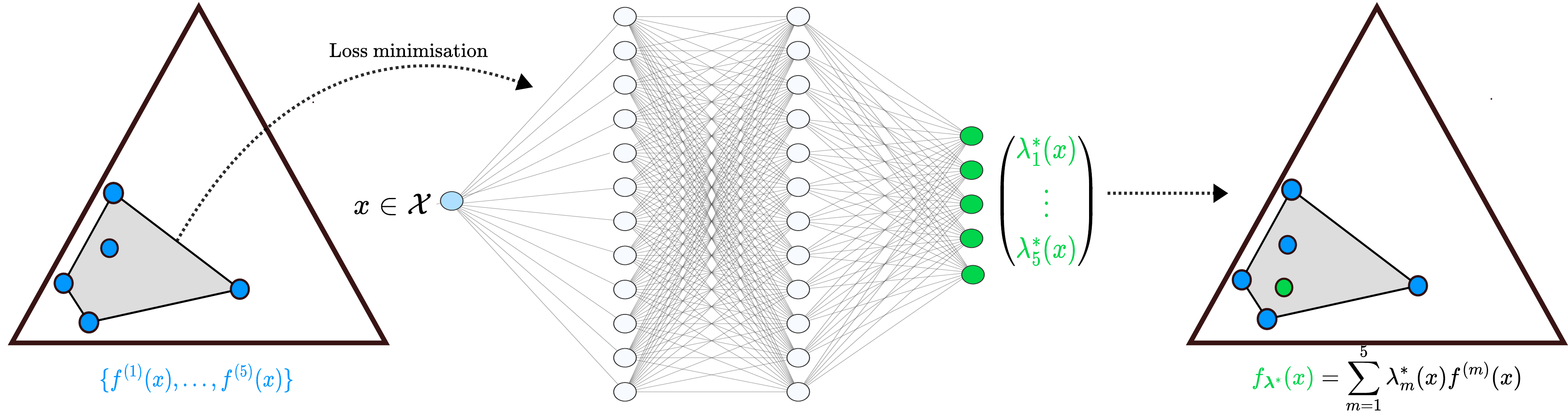}
    \caption{\added{General concept of learning the optimal function $\boldsymbol{\lambda}^*: \mathcal{X} \rightarrow \Delta_{M, \mathcal{X}}$, here for the example of $M=5$ ensemble predictions.} The neural network is trained using the combined calibration loss function as in Eq.\ (\ref{eq: combined calibration loss}), with the calibration estimators introduced in Section \ref{sec: calibration errors} for the calibration term. It predicts for a given instance value $x\in \mathcal{X}$ the optimal $\boldsymbol{\lambda}^*(x)$ such that the empirical calibration error for the combined predictor $f_{\boldsymbol{\lambda}^*}$ is minimized.}
    \label{fig: mlp optimization}
\end{figure}
We will now analyse further how optimization problem (\ref{eq: minimum calibration error}) can be solved in an efficient and robust manner. Specifically, we try to avoid the problem of \textit{overfitting} on the empirical calibration error on the respective dataset: Since the latter is finite, there might be a mismatch between the estimated and the population-level calibration error. Classical gradient-based solvers, which purely optimise the calibration error on the same dataset that is used for the test, might therefore run into the problem of underestimating the true calibration error, thereby making the test more conservative. This can also be seen in the empirical results of \citet{mortier2023calibration}. 
 Therefore, we suggest an alternative approach find the minimum in (\ref{eq: minimum calibration error}), which incorporates two important aspects:
 \begin{itemize}
     \item We use a neural network for learning the weight function $\boldsymbol{\lambda}: \mathcal{X} \rightarrow \Delta_{M, \mathcal{X}}$, exemplary visualized in Figure \ref{fig: mlp optimization}. This approach is similar to stacking \cite{wolpert1992stacked}, where a second meta-learner is trained to find the optimal combination of predictions. It is trained on a separate optimization dataset, and then used to predict the optimal convex combination for our test.
     \item As a loss function, we use the risk of a proper scoring rule $\ell$, combined with a term controlling the calibration error, such that the true risk minimizer is given by \begin{equation}
  \label{eq: combined calibration loss}
    \boldsymbol{\lambda}^* \in \argmin_{\boldsymbol{\lambda}\in \Delta_{M, \mathcal{X}}} \mathbb{E}\big[\ell(f_{\boldsymbol{\lambda}}(x), y)\big] + \gamma \cdot g(f_{\boldsymbol{\lambda}}).
\end{equation}
We use $\hat{g} \in \{\widehat{\mathrm{CE}}_{KL}, \widehat{\text{CE}}_2, \widehat{\text{CE}}_k, \widehat{\text{CE}}_{MMD}\}$ as estimates for the respective calibration error of interest, while $\gamma \geq 0$ denotes a weight factor. This approach takes recent insights \cite{popordanoska2022consistent,marx2024calibration} that adding a calibrating penalty to a proper scoring rule can help ensuring calibration, into account, while avoiding trivial or degenerate solutions. In particular, many convex combinations can be calibrated (Appendix \ref{sec: many calibrated convex combinations}), hence, the proper scoring rule serves as a \textit{regularizer}, guiding the optimizer toward solutions that not only reduce calibration error but also maintain high predictive accuracy. Furthermore, we learn $\boldsymbol{\lambda}^*$ on a validation set but evaluate calibration on a separate test set. This split reduces the risk of overfitting to our chosen calibration metric, ensuring the resulting test does not become overly conservative. 
 \end{itemize}
 
\section{Experimental results}
\label{sec: experimental results}
We evaluate our test both on synthetic and real-world data. A more detailed description of the experimental setup can also be found in Appendix \ref{sec: experimental setup} and the code for reproducing the experiments is available on GitHub.\footnote{\url{https://github.com/mkjuergens/EnsembleCalibration}}
For the experiments on synthetic data, a MLP architecture with  $3$ hidden layers and $16$ neurons is used. \added{Experimentally, we found that this simple architecture was sufficient to solve the optimization problem in Eq. (\ref{eq: combined calibration loss}) and learn the underlying weight function. For the experiments on real data, we use a more complex network architecture, which is described in Section \ref{sec: experiments real data}.} As a loss function, we use the combined loss as in (\ref{eq: combined calibration loss}), with the Brier score as a proper scoring rule, and $\gamma=0.01$.

\subsection{Binary classification}
\label{sec: binary classification}
For illustration purposes, we first examine the case of binary classification by simulating $M=2$ probabilistic predictors, each outputting a probability for the positive class. For each input $x$, the predictor's probabilities $f^{(1)}(x)$ and $f^{(2)}(x)$ are sampled from a Gaussian process (GP) constrained to $[0,1]$. We then define a \textit{true} calibrated predictor $f^*$ that lies inside (under $H_0$) or outside (under $H_1$) the convex hull of the two predictors.
Figure \ref{fig: gp calibration cases} shows the exemplary setting for this experiment. Under $H_0$, we consider two cases of non-instance and instance-dependence of $\boldsymbol{\lambda}=(\lambda_1, \lambda_2)^T$:
\begin{enumerate}
    \item $\boldsymbol{H_{0,1}}$: The calibrated predictor $f^*$ is a \textit{constant} convex combination of $f^{(1)}$ and $f^{(2)}$,
    where $\lambda_1^*(x) \equiv c \sim Unif([0,1])$, $\lambda_2^*(x) = 1 - \lambda_1^*(x)$.
    \item $\boldsymbol{H_{0,2}}$: $\lambda^*(x)$ is a randomly generated polynomial function and $f^*(x)= \lambda^*(x) f^{(1)}(x) +(1-\lambda^*(x))f^{(2)}(x)$.
\end{enumerate}
Under $H_1$, $f^*(x)$ lies strictly outside or near the boundary of the credal set. We generate $3$ scenarios of increasing distance from it: 
\begin{enumerate}
    \item $\boldsymbol{H_{1,1}}$: $f^*(x)$ is set at a small $\epsilon$-distance to one boundary.
    
    \item $\boldsymbol{H_{1,2}}$: is sampled from a GP that remains close but outside the credal set.
    \item $\boldsymbol{H_{1,3}}$: similar to $\boldsymbol{H_{1,2}}$, but allowing a larger maximum distance.
\end{enumerate}
\begin{figure}[t]
    \centering
    \begin{subfigure}{.49\textwidth}
        \centering
        \includegraphics[width=\textwidth]{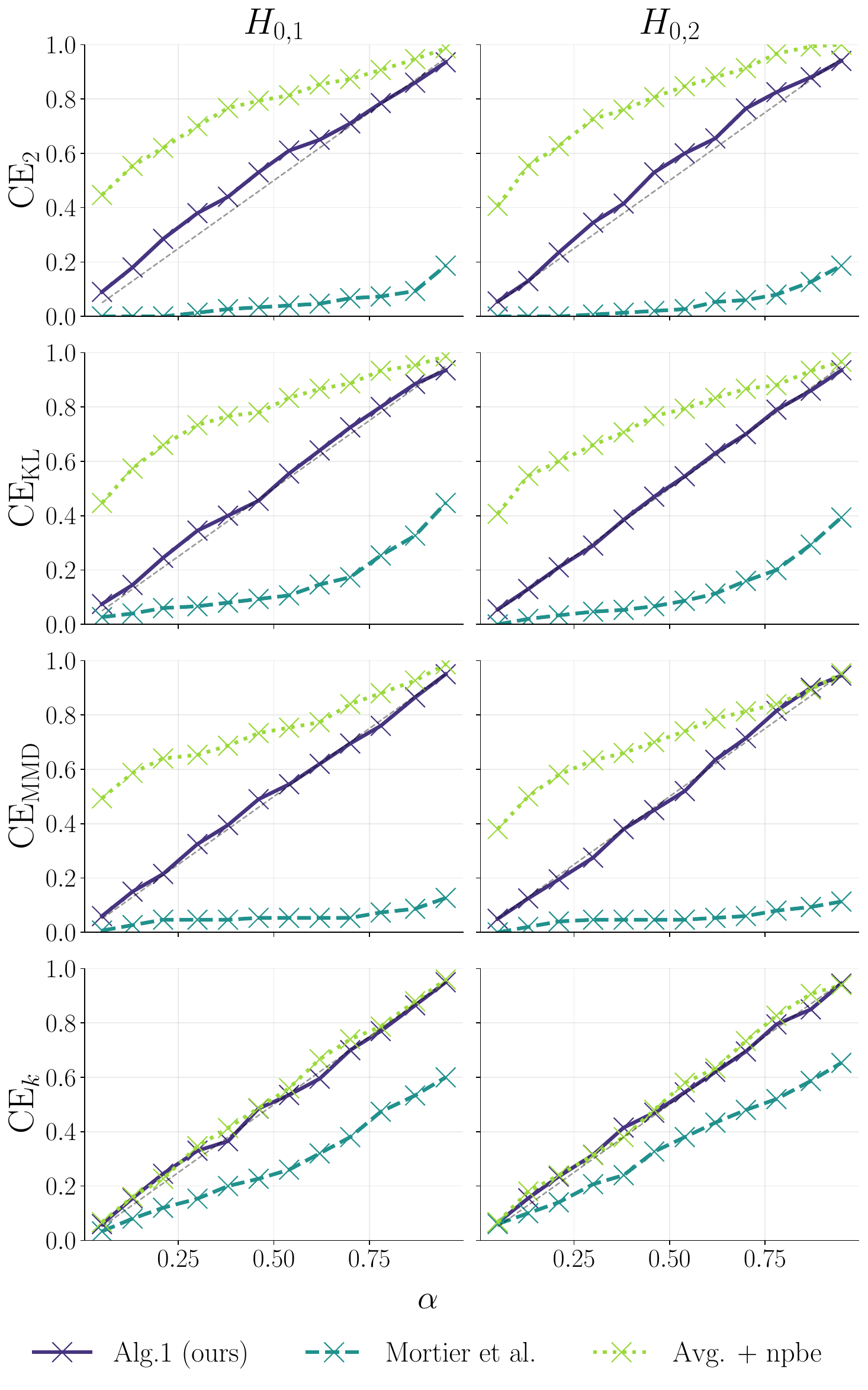}
     \caption{average Type 1 error}
    \end{subfigure}
    \begin{subfigure}{.49\textwidth}
        \centering
        \includegraphics[width=\textwidth]{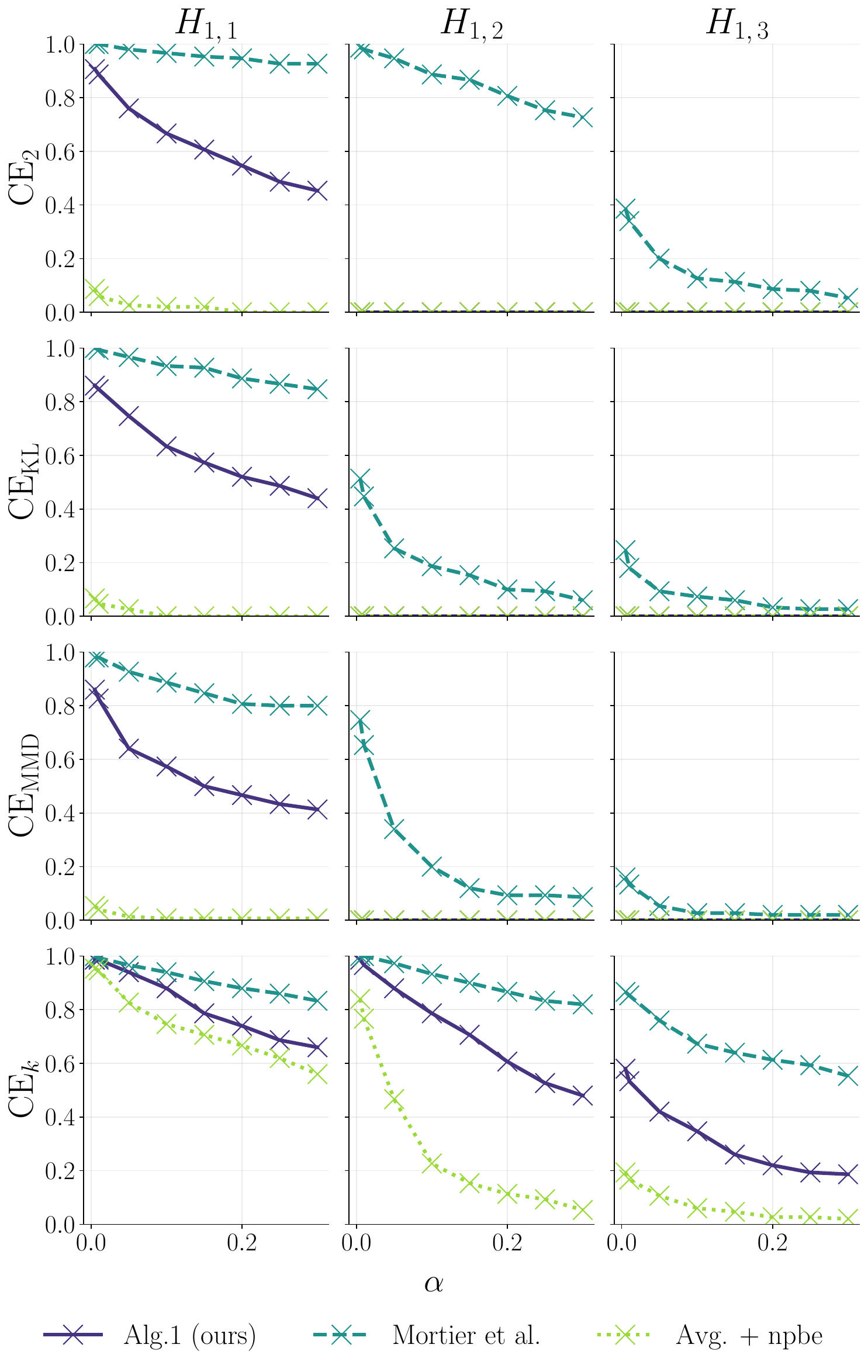}
        \caption{average Type 2 error}
    \end{subfigure}
    \caption{\added{Average Type $1$ and Type $2$ error given a significance level ($x$-axis) for Algorithm \ref{alg: Alg v2.0} with $D=100$ bootstrap iterations run on the binary classification experiment and the cases $\boldsymbol{H}_{0,1} - \boldsymbol{H}_{1,3}$ where the null hypothesis is true (a), and false (b), respectively. The average is taken over $200$ resampling iterations. In each iteration we newly generate a dataset of size $N_{test}=400$ on which the tests are performed. For the proposed test, we do the optimisation on a separate dataset $\mathcal{D}_{opt}$ of the same size. For comparison, the average Type $1$ and Type $2$ error of the previously proposed test (Mortier et al.) as well as the bootstrapping-based test of Vaicenavicius et al. applied to the mean prediction (Avg. + npbe) are shown.}}
    \label{fig: type12 error gp experiment}
\end{figure}
Using the described scenarios, Figure \ref{fig: type12 error gp experiment} shows the Type $1$ and Type $2$ error of the proposed test, averaged over $200$ runs of the experiment. For comparison, we include the results  given by the algorithm of \citet{mortier2023calibration}, as well as the algorithm of \citet{vaicenavicius19a} applied to the mean of the ensemble, $f_{\lambda_{AVG}}(x) = \frac{1}{M}\sum_{i=1}^M f^{(i)}(x)$. The average Type $1$ error  of our testlies close to the chosen significance level. In some cases, it lies slightly above it, which is is due to the generalization error on the unseen data: Since we learn the optimally calibrated convex combination on a separate dataset, the learned convex combination on the test set might be not the one with lowest calibration error, due to the inherent randomness within the data. \added{Nonetheless, with increasing sample size, we expect the Type $1$ error to be controlled by the significance level (due to the test's asymptotic validity). The results show that the test of Mortier et al. is overly conservative, with a rejection rate far lower than the significance level. The test of Vaicenavicius applied to the mean prediction has a Type $1$ error far higher than the significance level; which is expected, as the mean prediction might not lie close enough to the underlying conditional distribution for the test to not reject.
Both our test and the test of Vaicenavicius applied to the mean show a high power for this setting, with almost zero Type $2$ error also in the case where the true distribution lies in very close distance to the polytope. However, the conservative nature of the test of Mortier et al. leads to a higher Type $2$ error, thereby revealing the provided benefit of our test in terms of power.}

\subsection{Multi-class classification}
We consider $K > 2$ classes, and follow a Dirichlet-based scheme to generate an ensemble of $M$ probabilistic predictors. Specifically, we generate a prior $\boldsymbol{p} \sim Dir(\boldsymbol{1})$ and predictions $f^{(1)}(x), \dots, f^{(M)}(x) \sim Dir(\frac{\boldsymbol{p}|x \cdot K}{u(x)})$, where $u: \mathcal{X} \rightarrow \mathbb{R}_{>0}$ is a function defining the epistemic uncertainty (i.e.\ the larger $u$, the higher is the \textit{spread} between the predictions within the set) over the instance space. For the experiments, we use a constant $u(x) \equiv 0.5$. Under $H_0$, which we found to be a good tradeoff to evaluate Type 1 and 2 error of the test. We distinguish between
\begin{enumerate}
    \item  $\boldsymbol{H_{0,1}}$:  $\boldsymbol{\lambda^*}(x) \equiv \boldsymbol{c} \in [0,1]$ where $\boldsymbol{c} \sim Dir(1, \dots,1) \in \Delta_M$ ($\boldsymbol{\lambda^*}$ is constant across the instance space), and
    \item $\boldsymbol{H_{0,2}}$: $\boldsymbol{\lambda^*}(x) = (\lambda_1^*(x), \dots, \lambda_M^*(x)) \in \Delta_M$ with $\lambda^*_m(x) = \sum_{i=0}^D \beta_i x^i$ for $m =1, \dots,M$ and $\sum_{m=1}^M \lambda_m^*(x) =1$ (the components of $\boldsymbol{\lambda^*}$ form scaled polynomials of a certain degree). 
\end{enumerate}
For the alternative hypothesis, we select a random corner $f_c \notin \mathcal{F}$ of the simplex and then set the true underlying conditional distribution $f^*$ as a point on the  connecting line between the corner and the boundary point $f_b \in \mathcal{F}$ of the credal set: $f^*(x) = \delta \cdot f_c(x) + (1- \delta) \cdot f_b(x)$. We define three cases with increasing distance to the credal set by varying the mixing coefficient $\delta$: $\boldsymbol{H_{1,1}}$: $\delta = 0.01$,  $\boldsymbol{H_{1,2}}$: $\delta = 0.1$ and  $\boldsymbol{H_{1,3}}$: $\delta = 0.2$.\\

The resulting Type $1$ and Type $2$ errors of this experiment are shown in Figure \ref{fig: type 1 error dirichlet experiment}, again for our test and the two baselines. We see that the proposed test yields more reliable decisions, not heavily exceeding the given significance level while following it closely. The test of Mortier et al.\ is again overly conservative for all estimators, except for the kernel calibration error estimator $\widehat{\text{CE}}_k$, which in general leads to unreliable results and low power.
\added{While the test of Vaicenavicius et al. applied to the mean prediction yields the highest power, it cannot be seen as a valid test, as it does not control the Type $1$ error rate.
Note here that we use the same datasets for all three tests, but we perform the optimisation for our proposed test on a separate dataset of the same size.
The Type $2$ error analysis shows that, except for the kernel calibration error, our proposed test has higher power than the previously proposed one, and it aligns better with the chosen significance level.}

\begin{figure}[t]
    \centering
    \begin{subfigure}{.49\textwidth}
        \centering
        \includegraphics[width=\textwidth]{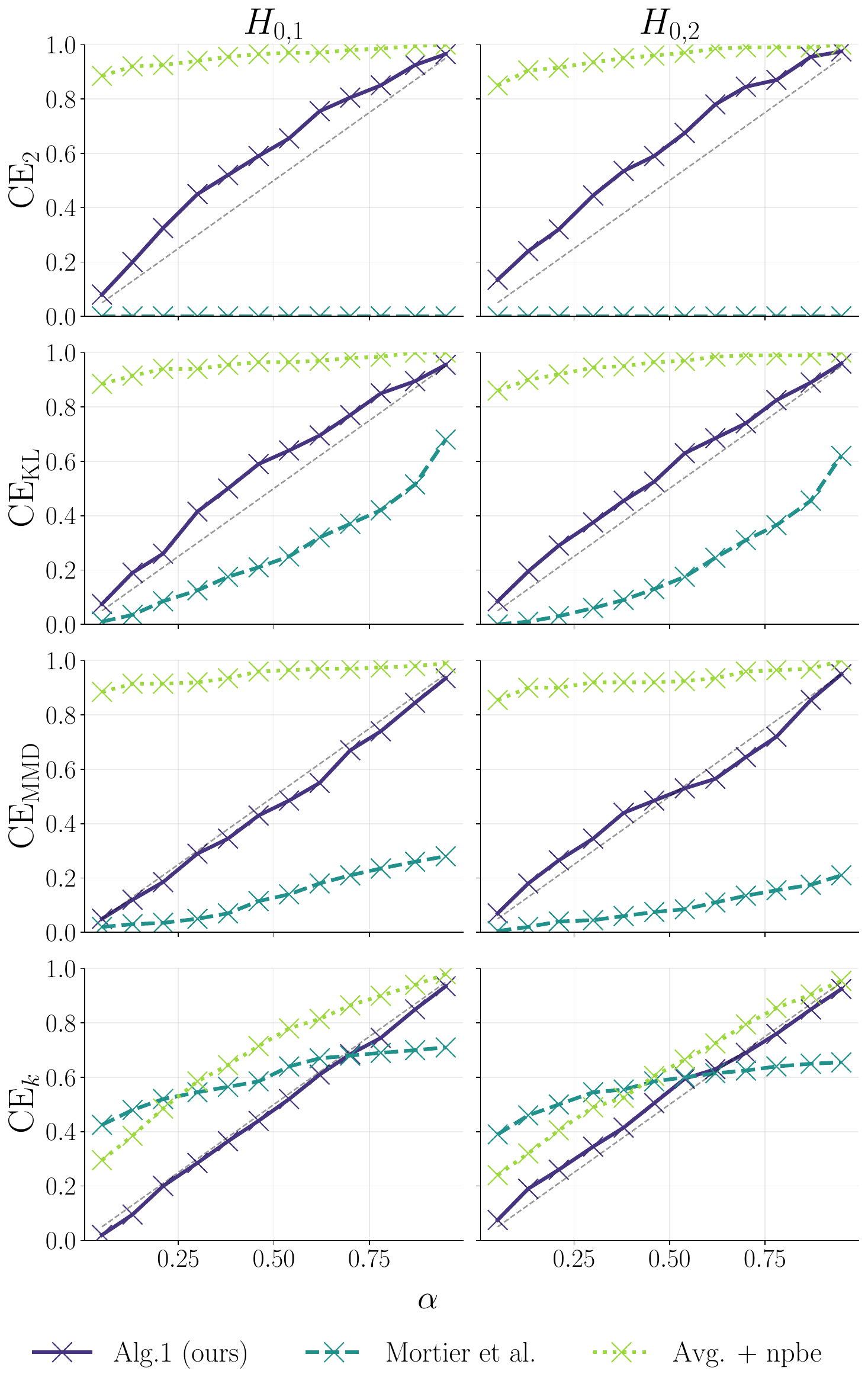}
     \caption{average Type $1$ error}
    \end{subfigure}
    \begin{subfigure}{.49\textwidth}
        \centering
        \includegraphics[width=\textwidth]{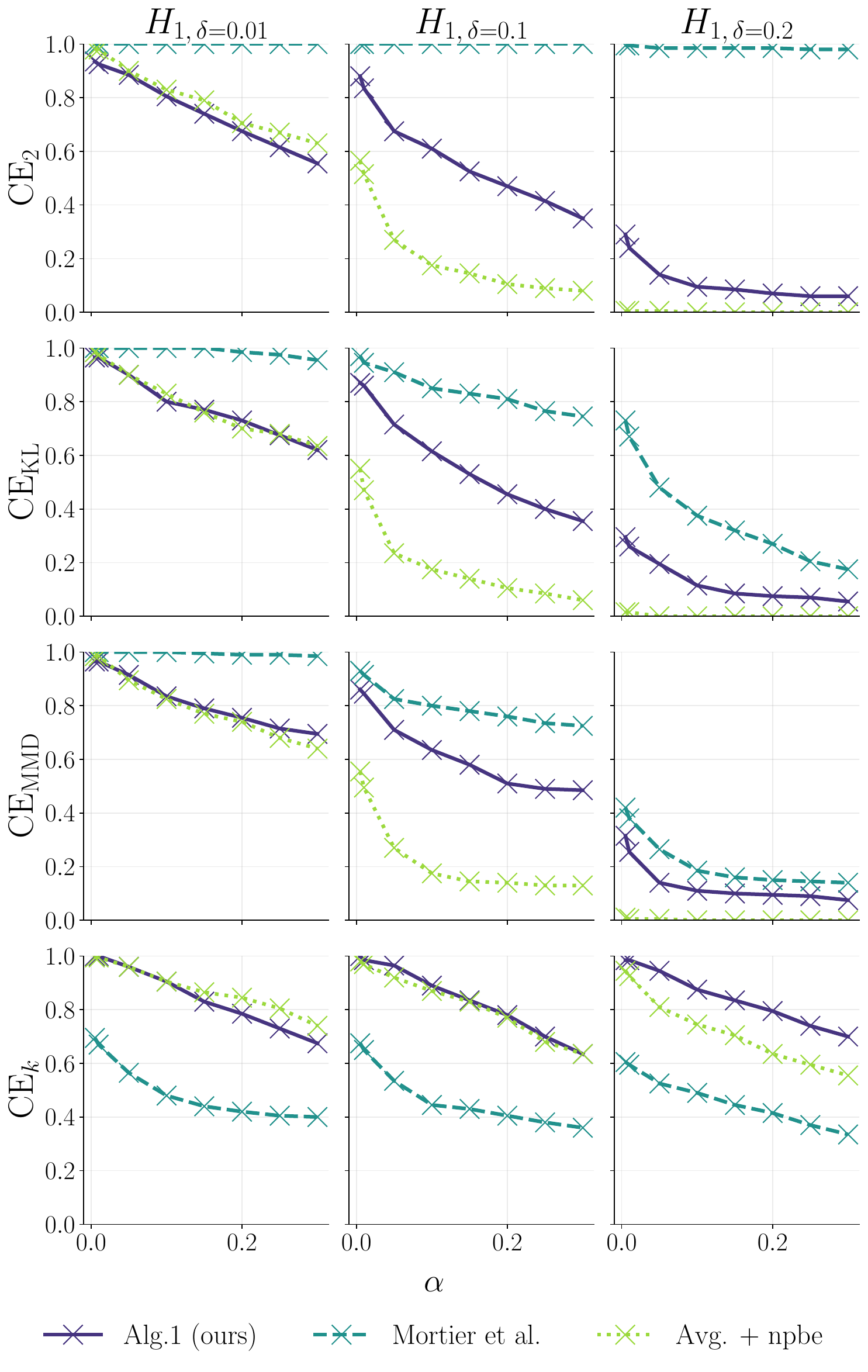}
        \caption{average Type $2$ error}
    \end{subfigure}
    \caption{\added{Average Type $1$ and Type $2$ error given a significance level ($x$-axis) for Algorithm \ref{alg: Alg v2.0} with $D=100$ bootstrap iterations run on the multi-class classification experiment, and the settings $\boldsymbol{H}_{0,1} - \boldsymbol{H}_{1,3}$ where the null hypothesis is true (a), and false (b), resp. with $N_{test}=400$ instances, $M=10$ ensemble members, $K=5$ classes and uncertainty $u=0.5$. The average is taken over $200$ resampling iterations. For comparison, the average Type $1$ and Type $2$ error of the previously proposed test (Mortier et al.) as well as the bootstrapping test applied to the mean prediction (Avg. + npbe) are shown.}}
    \label{fig: type 1 error dirichlet experiment}
\end{figure}


\subsection{Experiments on real-world data}

\begin{table}[t]
\centering
\resizebox{.9\linewidth}{!}{%
\begin{tabular}{c|c|c|c|cccc|cccc}
\toprule
DATA & ARCH. & TYPE & ERR.
      & \multicolumn{4}{c|}{$\boldsymbol{\lambda}_{\mathrm{AVG}}$} 
      & \multicolumn{4}{c}{$\boldsymbol{\lambda}^*$} \\
\cmidrule(lr){5-8} \cmidrule(lr){9-12}
& & & 
& Acc. & $H_0$ & $p$--val. & $\hat{g}$
& Acc. & $H_0$ & $p$--val. & $\hat{g}$ \\
\midrule
\multirow{24}{*}[7ex]{\rotatebox[origin=c]{90}{%
    \parbox{2cm}{\centering CIFAR-$10$}}} & \multirow{12}{*}[4ex]{\rotatebox[origin=c]{90}{\parbox{2cm}{\centering ResNet-$18$}}}
 & \multirow{2}{*}{DE} 
   & $\text{CE}_{KL}$   
     & 0.869 & \(\neg\)REJ. & 1.000 & 0.259  
     & 0.873 & \(\neg\)REJ. & 0.990 & 0.262 \\[0.3em]
& &  & $\text{CE}_2$   
     & 0.869 & \(\neg\)REJ. & 0.970 & 0.125  
     & 0.872 & \(\neg\)REJ. & 0.950 & 0.121 \\[0.3em]
     
\cmidrule(lr){3-12}
& & \multirow{2}{*}{DN($0.2$)} 
   & $\text{CE}_{KL}$
     & 0.847 & \(\neg\)REJ. & 0.810 & 0.293
     & 0.848 & \(\neg\)REJ. & 0.780 & 0.297 \\[0.3em]
& &  & $\text{CE}_2$    
     & 0.847 & \(\neg\)REJ. & 0.800 & 0.129
     & 0.849 & \(\neg\)REJ. & 0.760 & 0.295 \\[0.3em]

\cmidrule(lr){3-12}
& & \multirow{2}{*}{DN($0.5$)} 
   & $\text{CE}_{KL}$   
     & 0.507 & \(\neg\)REJ. & 0.800 & 0.330
     & 0.507 & \(\neg\)REJ. & 0.780 & 0.329 \\[0.3em]
& &  & $\text{CE}_2$  
     & 0.507 & \(\neg\)REJ. & 0.220 & 0.104
     & 0.507 & \(\neg\)REJ. & 0.350 & 0.106 \\[0.3em]
     
\cmidrule(lr){2-12}
& \multirow{12}{*}[4ex]{\rotatebox[origin=c]{90}{\parbox{2cm}{\centering VGG-$19$}}}
 & \multirow{2}{*}{DE} 
   & $\text{CE}_{KL}$  
     & 0.900 & \(\neg\)REJ. & 0.530 & 0.176
     & 0.902 & \(\neg\)REJ. & 0.420 & 0.166 \\[0.3em]
& &  & $\text{CE}_2$     
     & 0.900 & \(\neg\)REJ. & 0.150 & 0.079
     & 0.901 & \(\neg\)REJ. & 0.180 & 0.084 \\[0.3em]
     
\cmidrule(lr){3-12}
& & \multirow{2}{*}{DN($0.2$)} 
   & $\text{CE}_{KL}$ 
     & 0.903 & REJ.         & 0.000 & 0.134
     & 0.903 & REJ.         & 0.000 & 0.133 \\[0.3em]
& &  & $\text{CE}_2$  
     & 0.903 & REJ.         & 0.000 & 0.042
     & 0.903 & REJ.         & 0.000 & 0.042 \\[0.3em]
     
\cmidrule(lr){3-12}
& & \multirow{2}{*}{DN($0.5$)} 
   & $\text{CE}_{KL}$   
     & 0.899 & REJ.         & 0.000 & 0.131
     & 0.896 & REJ.         & 0.000 & 0.129 \\[0.3em]
& &  & $\text{CE}_2$     
     & 0.899 & REJ.         & 0.010 & 0.047
     & 0.896 & REJ.         & 0.000 & 0.056 \\[0.3em]

\midrule
\midrule
\multirow{24}{*}[7ex]{\rotatebox[origin=c]{90}{%
    \parbox{2cm}{\centering CIFAR-$100$}}} & \multirow{12}{*}[4ex]{\rotatebox[origin=c]{90}{\parbox{2cm}{\centering ResNet-$18$}}}
 & \multirow{2}{*}{DE} 
   & $\text{CE}_{KL}$   
   & 0.586 & \(\neg\)REJ. & 1.000 & 1.200
     & 0.589 & \(\neg\)REJ. & 1.000 & 1.203 \\[0.3em]
& &  & $\text{CE}_2$   
     & 0.586 & REJ.         & 0.000 & 0.476
     & 0.589 & REJ.         & 0.000 & 0.482 \\[0.3em]
     
\cmidrule(lr){3-12}
& & \multirow{2}{*}{DN($0.2$)} 
   & $\text{CE}_{KL}$
     & 0.264 & \(\neg\)REJ. & 1.000 & 2.123
     & 0.265 & \(\neg\)REJ. & 1.000 & 2.126 \\[0.3em]
& &  & $\text{CE}_2$    
     & 0.264 & REJ.         & 0.000 & 0.552
     & 0.265 & REJ.         & 0.000 & 0.551 \\[0.3em]
\cmidrule(lr){3-12}
& & \multirow{2}{*}{DN($0.5$)} 
   & $\text{CE}_{KL}$   
        & 0.352 & \(\neg\)REJ. & 1.000 & 2.123
     & 0.353 & \(\neg\)REJ. & 1.000 & 2.136 \\[0.3em]
& &  & $\text{CE}_2$  
      & 0.352 & REJ.         & 0.000 & 0.620
     & 0.351 & REJ.         & 0.030 & 0.624 \\[0.3em]
     
\cmidrule(lr){2-12}
& \multirow{12}{*}[4ex]{\rotatebox[origin=c]{90}{\parbox{2cm}{\centering VGG-$19$}}}
 & \multirow{2}{*}{DE} 
   & $\text{CE}_{KL}$  
     & 0.647 & \(\neg\)REJ. & 0.510 & 0.925
     & 0.648 & \(\neg\)REJ. & 0.390 & 0.927 \\[0.3em]
& &  & $\text{CE}_2$     
     & 0.647 & REJ.         & 0.000 & 0.347
     & 0.647 & REJ.         & 0.000 & 0.346 \\[0.3em]

\cmidrule(lr){3-12}
& & \multirow{2}{*}{DN($0.2$)} 
   & $\text{CE}_{KL}$ 
      & 0.621 & REJ.         & 0.030 & 0.999
     & 0.620 & REJ.         & 0.000 & 1.006 \\[0.3em]
& &  & $\text{CE}_2$  
     & 0.621 & REJ.         & 0.020 & 0.343
     & 0.620 & REJ.         & 0.020 & 0.343 \\[0.3em]

\cmidrule(lr){3-12}
& & \multirow{2}{*}{DN($0.5$)} 
   & $\text{CE}_{KL}$   
    & 0.588 & \(\neg\)REJ. & 0.210 & 1.282
     & 0.587 & \(\neg\)REJ. & 0.410 & 1.274 \\[0.3em]
& &  & $\text{CE}_2$     
      & 0.588 & REJ.         & 0.010 & 0.417
     & 0.587 & REJ.         & 0.010 & 0.416 \\[0.3em]
\bottomrule
\end{tabular}}
\caption{Calibration test results on CIFAR-$10$ and CIFAR-$100$ for the significance level $\alpha=0.05$ and $D=100$ bootstrap iterations. 
For each architecture (ResNet-$18$ or VGG-$19$) and model type 
(Deep Ensemble (DE) or MC Dropout (DN($p$)) with rate $p \in \{0.2, 0.5\}$, we show results of running the proposed test using two different calibration errors, $g\in \{\text{CE}_{KL}, \text{CE}_2\}$. For $g=\text{CE}_{KL}$, log loss, for $g=\text{CE}_2$ the Brier score is used as the proper scoring rule for optimization.
Columns 4--7 show accuracy, test decision, 
\(p\)--value, and value of the test statistic for the average ensemble combination 
(\(\boldsymbol{\lambda}_{\mathrm{AVG}}\)), and columns 8--11 for the learned convex combination 
(\(\boldsymbol{\lambda}^*\)).}
\label{tab: results cifar10 and cifar100}
\end{table}

\label{sec: experiments real data}
Since the true data-generating distribution is unknown in real-world datasets, we cannot directly quantify Type $1$ or Type $2$ errors. Instead, we demonstrate the practical usefulness of our test by applying it to the two standard image classification tasks CIFAR-$10$ and CIFAR-$100$ \cite{krizhevsky2009learning}. More specifically, we apply our test on the predictions of three ensemble methods, combined with two different architectures that are trained on the two datasets, respectively. We train a deep ensemble (DE, \cite{NIPS2017_lakshminarayan}), a dropout network with dropout rate $0.5$ (DN($0.5$), \cite{gal2016dropout}) and a dropout network with dropout rate $0.2$ (DN($0.2$)). For the deep ensemble, we train $10$ different models using different weight initializations, while in the dropout networks, dropout is applied after specific layers.  \added{A higher dropout rate injects more parameter noise during training, increasing predictor diversity and thus enlarging the credal set, which is why we analyse the two different cases}. 
After training, we obtain a set of different predictions from these models, which we then use to run Algorithm \ref{alg: Alg v2.0}.
As model architectures, we use two architectures of different model complexity: the ResNet-$18$ \cite{he2016deep} ($11.6 \times 10^6$ parameters) and the VGG-$19$ \cite{simonyan2014very} ($138 \times 10^6$ parameters). For the optimisation part, we leverage the same architecture, i.e.\ a pre-trained ResNet-18 or VGG-19, to embed images and attach a fully-connected layer with $32$ neurons to predict the weight function $\boldsymbol{\lambda}$. Section \ref{sec: experimental setup} describes the full experimental setup. 
The results are shown in Table \ref{tab: results cifar10 and cifar100}. For each model and calibration estimator, we run the respective bootstrapping test (line 5-19 of Algorithm \ref{alg: Alg v2.0}), with the optimal weight function $\boldsymbol{\lambda}^*$ learned by the neural network. Similarly as in the synthetic experiments, we also perform the bootstrapping part of the test with the mean predictor (Algorithm \ref{alg: Alg v2.0} from line $5$ with $f_{\boldsymbol{\lambda}_{AVG.}}=\frac{1}{M}\sum_{j=1}^M f^{(j)}$). The results differ significantly, depending on the used calibration error. In general, using the calibration error $\text{CE}_{KL}$ led to the fewest rejections (i.e.\, the highest p-values), potentially because the models were trained with a log-loss objective as proper scoring rule, which aligns more closely with KL-based calibration measures. In general, for the CIFAR-$100$ dataset with more classes, the test rejects more often, and the dropout networks are less calibrated. In most cases, the learned convex combination $\boldsymbol{\lambda}^*$ leads to a slight increase in accuracy for the combined prediction $f_{\boldsymbol{\lambda}^*}$.

\section{Discussion}
We developed a novel statistical test for the calibration of epistemic uncertainty representations based on sets of probability distributions. It is defined on an instance-based level, thereby making the test more flexible. As calibration forms a necessary condition for \textit{validity}, we claim that by testing for it, one can safely detect scenarios where the credal set is \textit{not} valid. For this case, the next step to include would be \textit{actionability}, i.e., ways to increase the size of the credal set to include at least one calibrated convex combination.
Here, we model calibration as a property that applies across the entire instance space. Alternatively, calibration could be viewed as an instance-specific concept, allowing analysis in different regions of the instance space. However, there has been limited research on this form of calibration, sometimes referred to as conditional or \textit{local} calibration \cite{luo2022local}, and the challenge of partitioning the instance space in a way that enables the computation of expectations remains unresolved. \added{Interesting future work includes the application of the test to other credal set representations such as interval-neural networks \cite{wang2025creinns}, credal sets based on relative log-likelihood \cite{lohr2025credal} and bootstrapped ensembles, as well as the analysis of the regression case.}
\subsection*{Acknowledgements}
 M.J. and W.W. received funding from the Flemish Government under the “Onderzoeksprogramma Artifici\"ele Intelligentie (AI) Vlaanderen” programme.









%
%
%
\bibliography{sn-bibliography}
\begin{appendices}
\section{Estimator analysis}\label{sec: estimator analysis}
In this section we empirically analyse the distribution of the chosen calibration estimators and their behaviour with respect to the distance from the underlying ground truth probability distribution.
Figure \ref{fig: histogram estimators h0} shows the distribution of the estimators under the null hypothesis.
In the figure, we see that the mean of the estimators $\widehat{\text{CE}}_{2}$ and $\widehat{\text{CE}}_{KL}$ is slightly above zero - since they are only asymptotically unbiased - while 
$\widehat{\text{CE}}_{MMD}$ and $\widehat{\text{CE}}_{k}$ also empirically show their unbiasedness. The kernel calibration estimator $\widehat{CE}_k$ is symmetrically distributed around zero -- its asymptotic normality was also shown by Widmann et al.\ \cite{widmann2019calibration}. As the estimators $\widehat{CE}_k$ and $\widehat{CE}_{MMD}$ can obtain negative values, we use their squared versions for the optimization in Eq.\ (\ref{eq: combined calibration loss}).
\begin{figure}[h]
    \centering
    \includegraphics[width=0.7\linewidth]{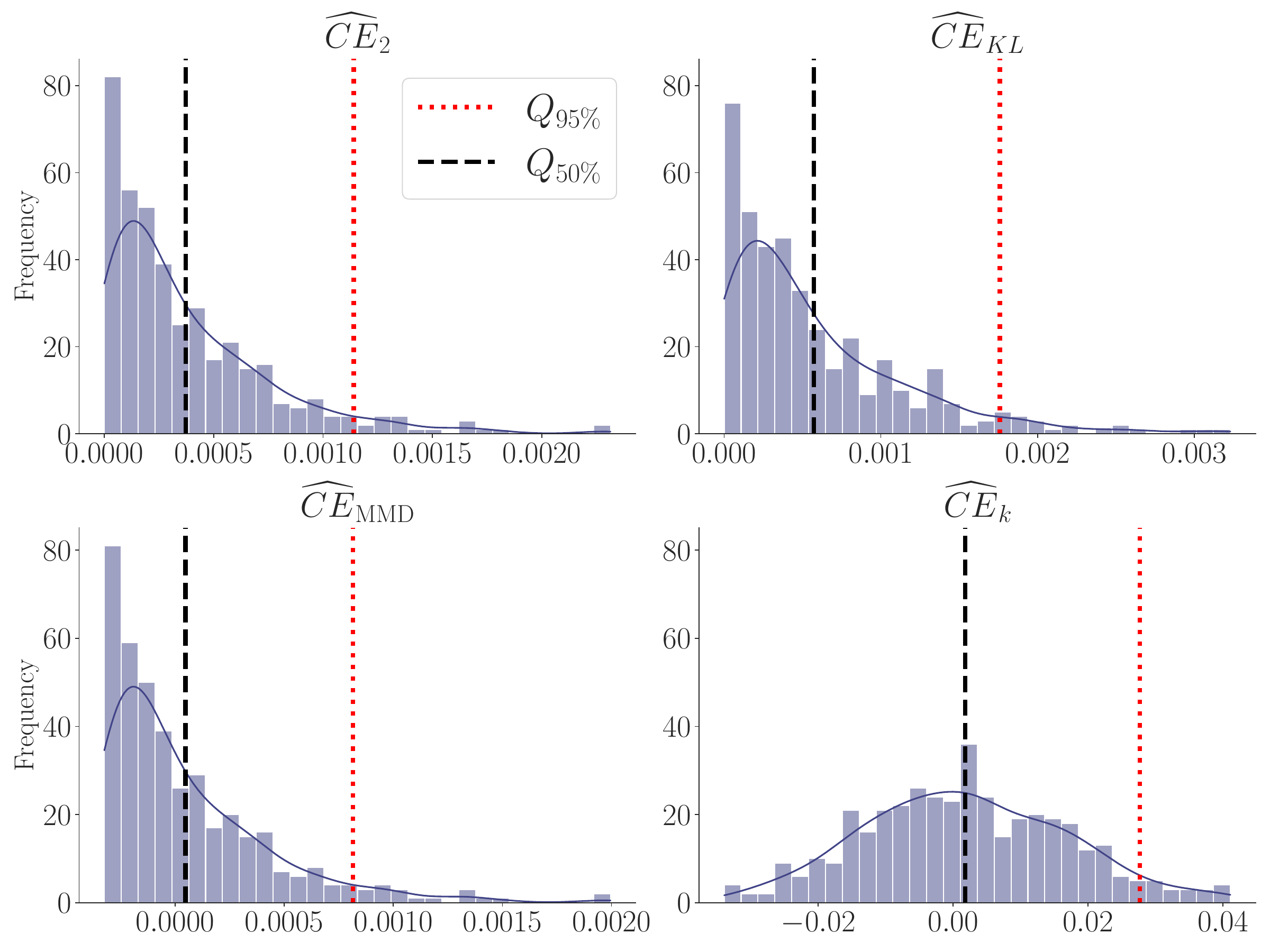}
    \caption{\textit{Distributions of calibration error estimators under the null hypothesis}. Histograms (with kernel density estimates) of the estimated calibration error $ g \in \{\widehat{CE}_{2},\widehat{CE}_{KL}, \widehat{CE}_{\mathrm{MMD}},\widehat{CE}_{k} \}$ are shown. Predicted probabilities were sampled from a uniform Dirichlet distribution over three classes ($K = 3$) and repeated $N = 1000$ times to simulate a perfectly calibrated model. Labels were sampled from the corresponding categorical distribution in $D = 500$ resampling steps. Vertical dashed black lines indicate the mean values of each statistic. Vertical dashed red lines indicate the $95\%$ quantile of the empirical distribution. }
    \label{fig: histogram estimators h0}
\end{figure}

Figure \ref{fig: heatmaps calibration estimates} shows the values of different calibration estimators in dependence of the position in the simplex, where the true underlying calibrated convex combination $f_{\boldsymbol{\lambda}^*}$ is set with $\boldsymbol{\lambda}^*=(0.1, 0.1, 0.8)$, and the predictions are sampled from a Dirichlet distribution. We see that by optimizing over $\boldsymbol{\lambda}$, the true calibrated convex combination is learned differently "well enough". However, $\boldsymbol{\lambda}^*$ still lies within a certain region of low estimator values. For $\widehat{CE}_k$, we see in general more noisy behaviour in the region around the ground truth distribution (red dot).
\begin{figure}[h]
    \centering
    \includegraphics[width=0.7\linewidth]{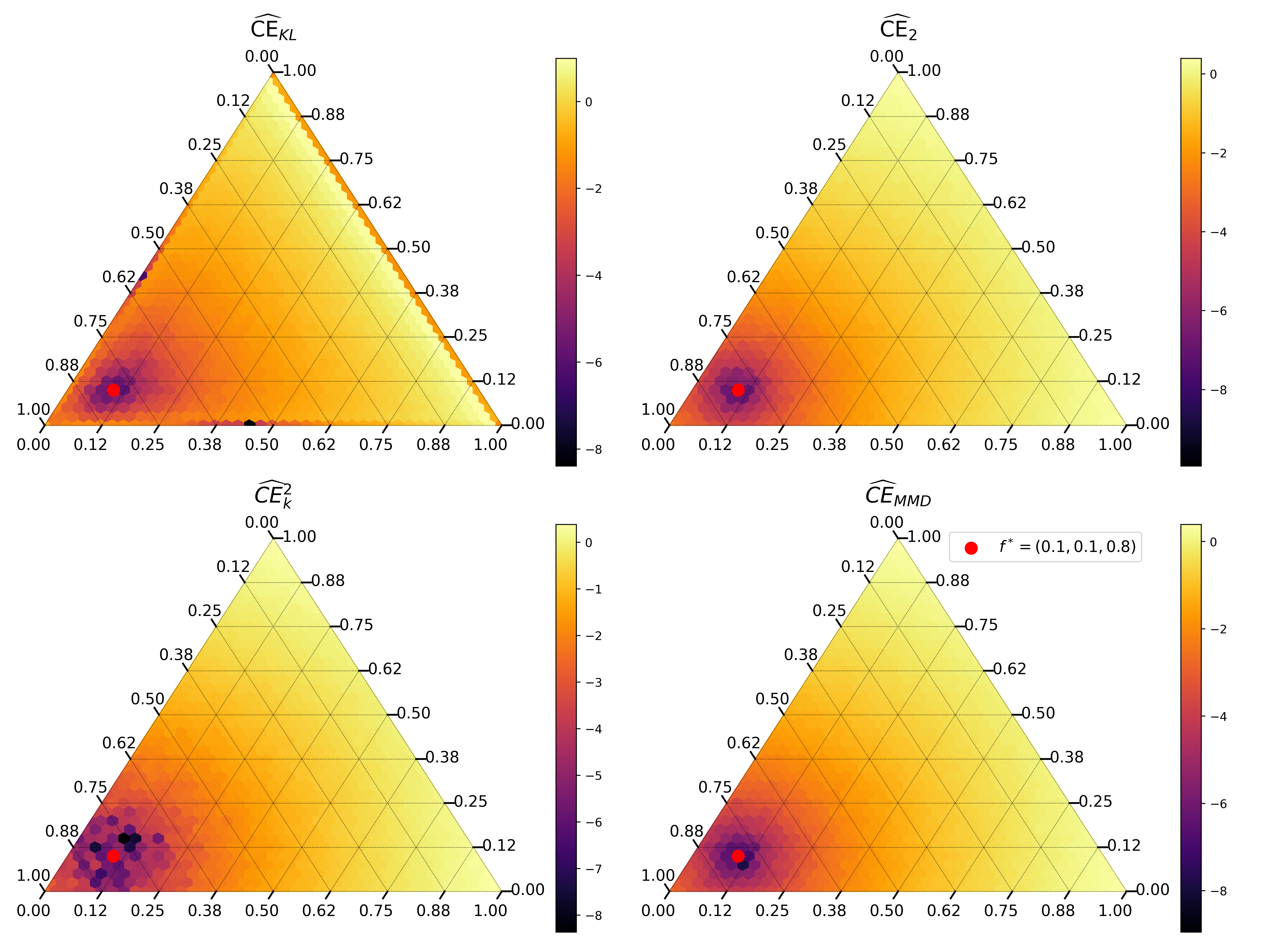}
    \caption{Simple experiment that illustrates the behaviour of the calibration estimators for the case of a \textit{non-instance-dependent} underlying ground truth distribution. Log-transformed values of the empirical calibration error $\hat{g}(\hat{f}, \mathcal{D})$ where $\hat{g} \in \{\widehat{\text{CE}}_{KL}, \widehat{\text{CE}_2}, \widehat{\text{CE}}_{k}^2, \widehat{\text{CE}}_{MMD}\},$ in dependence of the (constant) predictor $\hat{f}$ within the probability simplex, for the case of $K=3$ classes. $N=2000$ labels $y_i$ in $\mathcal{D}$ were generated from the ground truth categorical distribution, $y_i \sim \text{Cat}(f^*), \, i=1, \dots, 2000$. Here, we set $f^*(x) \equiv (0.1, 0.1, 0.8)^T$ constant for all $x \in \mathcal{X}$. The red point corresponds to $f^*$ and represents the theoretical minimum of the calibration error.}
    \label{fig: heatmaps calibration estimates}
\end{figure}

We also do a similar analysis in $\lambda$-space: In Figure \ref{fig: heatmaps lambda space}, we show the value of the calibration estimators not in distance to the ground truth probability distribution, but the ground truth weight vector $\boldsymbol{\lambda}\in \Delta_3$, given the predictions of two predictors $\{f^{(1)}(x), f^{(2)}(x), f^{(3)}(x)\}$ and a ground truth probability distribution $f_{\boldsymbol{\lambda}^*}(x) = 0.1 \cdot f^{(1)}(x) + 0.1 \cdot f^{(2)}(x) + 0.8 \cdot f^{(3)}(x)$, that is $\boldsymbol{\lambda}^*=(0.1,0,1,0.8)^T$. We see that here the heatmaps vary more significantly for the different calibration estimators. 
\begin{figure}[h]
    \centering
    \includegraphics[width=.7\linewidth]{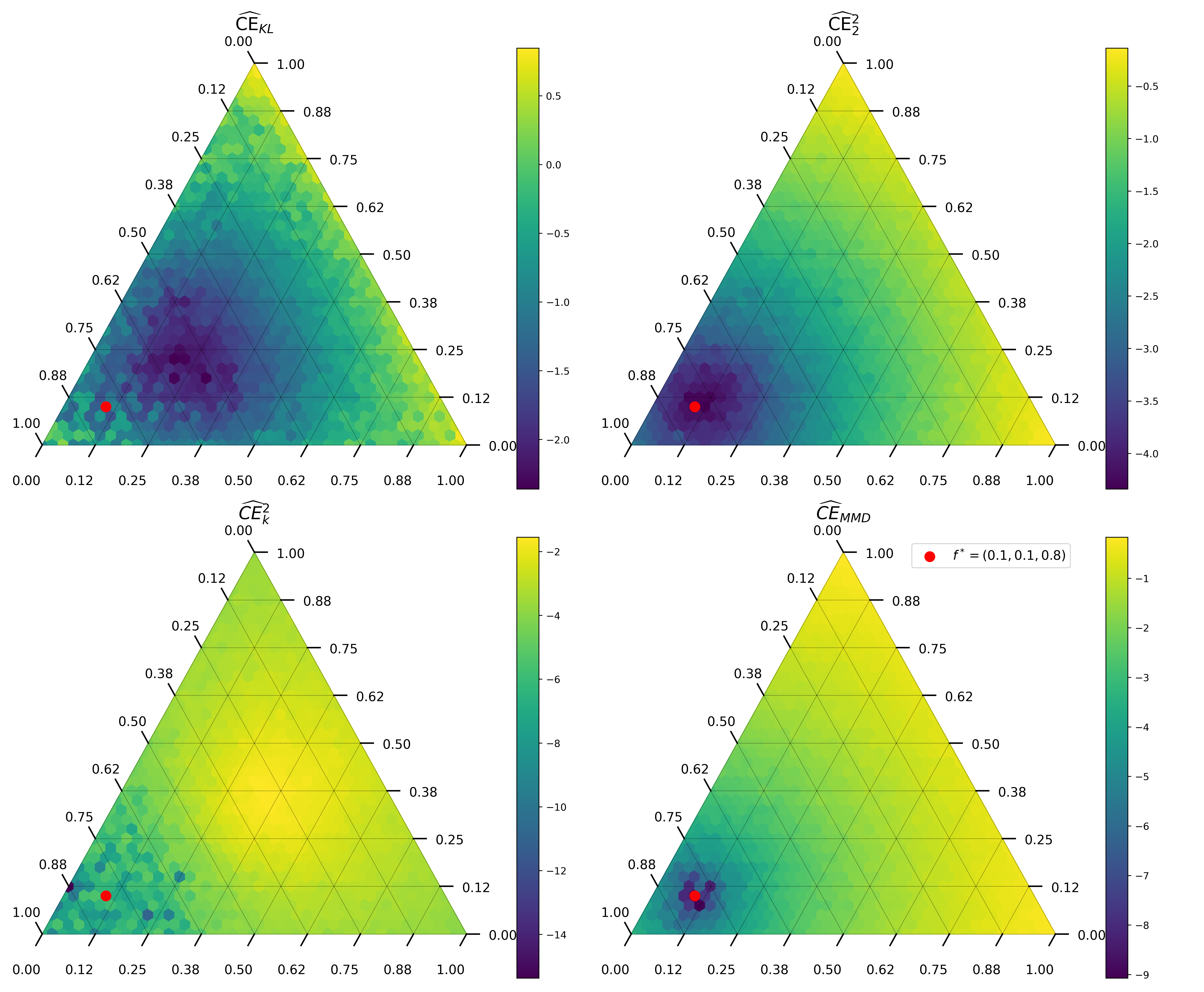}
    \caption{Heatmaps of the values of the chosen calibration estimators for a given location within the space of weights $\boldsymbol{\lambda} \in \Delta_3$. The predictions $f^{(i)} \in \{f^{(1)}, f^{(2)}, f^{(3)}\}$ are generated according to a Dirichlet distribution with parameters $ \alpha_j = 1 + \mathbb{I}_{i=j} \cdot 10$ for $j \in \{1,2,3\}$.}
    \label{fig: heatmaps lambda space}
\end{figure}
\section{Non-uniqueness of calibrated models}
\label{sec: many calibrated convex combinations}
Using simple examples, we will show the non-uniqueness of calibrated convex combinations, that is, that there are often many different (possibly instance-dependent) convex combinations leading to a calibrated predictor. Using a proposition proven by Vaicenavicius et al.\ \cite{vaicenavicius19a}, which states that conditioning $Y$ on any measurable function $h$ yields a calibrated predictor, we show this with a concrete example.
\begin{example}[Many calibrated classifiers]
\label{example: many calibrated classifiers} For the following, assume the case of having $3$ classes, i.e.\ $\mathcal{Y}= \{1,2,3\}$.
    We have (\cite{vaicenavicius19a}, Proposition 1) for any measurable function $h: \mathcal{X} \mapsto \mathcal{Z}$ with $\mathcal{Z}$ being
    some measurable space, that the function \begin{equation*}
        f(X):= \mathbb{P}[Y \in \cdot | h(X)]
    \end{equation*}
    is a calibrated classifier. 
        As an example, taking a constant $h(x) = c \in \mathbb{R} \, \forall x \in \mathcal{X}$,
    the classifier that simply predicts the marginal class probabilities \begin{equation*}
        f \equiv \begin{pmatrix}
            \mathbb{P}(Y=1) \\
            \mathbb{P}(Y=2) \\
            \mathbb{P}(Y=3) \\
        \end{pmatrix}
    \end{equation*}
    is calibrated for $\mathcal{Y}= \{1, 2, 3 \}$. Further, if we take $h: \mathcal{X}\mapsto \mathcal{X}$ with $h(x)=x$, then \begin{equation*}
        f' \equiv \begin{pmatrix}
            \mathbb{P}(Y=1|X) \\
            \mathbb{P}(Y=2|X) \\
            \mathbb{P}(Y=3|X) \\
        \end{pmatrix}
    \end{equation*}
    is also a calibrated classifier. To make this example more concrete, let $\mathcal{X}=\{1,2,3\}$ with
    $\mathbb{P}(X=x)=\frac{1}{3} \forall \, x \in \mathcal{X}$. Further let again $\mathcal{Y}=\{1,2,3\}$ and \begin{equation*}
        \mathbb{P}(Y=i|X=j) = \begin{cases}
            1, i=j \\
            0, \text{else}
        \end{cases}
    \end{equation*}
    for $i, j \in \{1,2,3\}$.
Then  $\mathbb{P}(Y=i)= \sum_{j=1}^3 \mathbb{P}(Y=i|X=j)\mathbb{P}(X=j)= \frac{1}{3}$
for $i \in \{1,2,3\}$, and \begin{equation*}
    f(x) \equiv \begin{pmatrix}
        1/3 \\
        1/3 \\
        1/3
    \end{pmatrix},
\quad f'(x) = \mathbf{e}_{x}
\end{equation*}
where $\mathbf{e}_{x}$ denotes the $x$-th unit vector, are both calibrated predictors.
\end{example}
Going from the case of one single classifier, similarly, one can also show that there usually exists \textit{various calibrated convex combinations} for a set of classifiers. The following example illustrates this for the simple case of binary classification.
\begin{example}
    Let us look at a very simple problem of binary classification with $\mathcal{Y} = \{1, 2 \}$ and $\mathcal{X}= \{-1,1\}$. Assume \begin{equation*}
        \mathbb{P}(Y=i|X=x_j)= \begin{cases}

            1, \quad (i, x_j) \in \{(1, -1), (2,1)\} \\
            0, \quad \text{else},
        \end{cases}
    \end{equation*}
 and $\mathbb{P}(X=x) = \frac{1}{2}, \quad x \in \mathcal{X}$. Then the two classifiers \begin{align*}
        f_1: \, & \mathcal{X} \rightarrow \Delta^2 \\
        & x \mapsto \begin{pmatrix}
            \frac{1}{2} \\
            \frac{1}{2}
        \end{pmatrix}, \quad x \in \{-1,1\}
    \end{align*}
    and \begin{align*}
        f_2: \, & \mathcal{X} \rightarrow \Delta^2 \\
        & x \mapsto \begin{cases}
            \begin{pmatrix}
                1 \\
                0 
            \end{pmatrix}, \quad x = -1 \\
        \begin{pmatrix}
            0 \\
            1
        \end{pmatrix}, \quad x = 1
        \end{cases}
    \end{align*}
    are both calibrated (which can easily be seen by Proposition 1 of \cite{vaicenavicius19a} and 
    conditioning on $h_1(x)=c$ and $h_2(x)=x$). Further, they can be written as convex combinations of the two classifiers $m_1(x) = \begin{pmatrix}
        0 \\
        1
    \end{pmatrix}, \forall x$ and $m_2(x) = \begin{pmatrix}
        1 \\
        0
    \end{pmatrix} \forall x$ using the two different convex combinations \begin{align*}
        \lambda_1:\, & \mathcal{X} \rightarrow \Delta^2 \\
        & x \mapsto \begin{cases}
            \begin{pmatrix}
                0 \\
                1
            \end{pmatrix}, x = -1 \\
            \begin{pmatrix}
                1 \\
                0
            \end{pmatrix}, x = 1
        \end{cases}
    \end{align*}
    and \begin{align*}
        \lambda_2: \, & \mathcal{X} \rightarrow \Delta^2 \\
        & x \mapsto \begin{pmatrix}
            \frac{1}{2} \\
            \frac{1}{2}
        \end{pmatrix}, x \in \{-1,1\}.
    \end{align*}
    \end{example}
\section{Validity of the proposed algorithm}
\label{sec: validity}
\added{In the following, we prove the asymptotic validity of the derived test.}
\begin{theorem}[Asymptotic validity of Algorithm \ref{alg: Alg v2.0}]
    Let $\mathcal{D}_{opt} =\{(x_i, y_i)\}_{i=1}^{n}$ and $\mathcal{D}_{val} = \{(x_i', y_i')\}_{i=1}^{m}$ be i.i.d. data generated from the underlying distribution $\mathbb{P}_{X,Y}$. Denote by \begin{equation*}
        \hat{\boldsymbol\lambda}_{n}\;\in\;
        \arg\min_{\Lambda}
        \frac{1}{n} \; \sum_{i=1}^n \ell(f_{\Lambda}(x_i),y_i)
                 +\gamma\,\widehat g_{n}(f_{\Lambda},\mathcal{D}_{opt})
    \end{equation*}
    the weight function that is the risk minimizer of the empirical version of Eq.\ (\ref{eq: combined calibration loss}) evaluated at $\mathcal{D}_{opt}$, where $\ell$ is the (strictly) proper Brier score or log loss and $\widehat{g}_n$ is a consistent, differentiable and (asymptotically) unbiased estimator of one of the calibration errors in Section \ref{sec: calibration errors}. Define $t_{m,n} := \hat{g}_m(f_{\hat{\boldsymbol{\lambda}}_n}, \mathcal{D}_{val})$ the test statistic evaluated on the combined predictor, and
    let $c_{m,D,\alpha}=\hat F^{-1}_{m,D}(1-\alpha)$ be the $(1-\alpha)$-quantile of the empirical cumulative distribution function $\hat{F}_{m,D}$
    of the bootstrap replicates $t_{0,1},\dots,t_{0,D}$ generated by the
consistency–resampling scheme of Vaicenavicius et al.\cite{vaicenavicius19a} (line $5-19$ of Algorithm \ref{alg: Alg v2.0}). 
\added{Furthermore, assume that the set of predictors $\mathcal{F}=\{f^{(1)}, \dots, f^{(M)}\}$ is such that the mapping $\boldsymbol{\lambda} \mapsto f_{\boldsymbol{\lambda}}$ is continuous over  $\mathcal{X} $ and for the respective calibration function $g$ all regularity conditions are fulfilled such that $f$ is calibrated if and only if $\text{g} = 0$.}
Then Algorithm \ref{alg: Alg v2.0} rejects $H_0$ whenever
$t_{m,n}\ge c_{m,D,\alpha}$.
Then, under $H_0$,
\begin{eqnarray*}
   \lim_{m,n\to\infty\atop D\to\infty}
   \mathbb{P}(
        t_{m,n}\ge c_{m,D,\alpha}|H_0
   )
   \;=\;\alpha \,.
\end{eqnarray*}
\end{theorem}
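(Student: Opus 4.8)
The statement is a composition of two asymptotic facts, and I would organise the proof accordingly: (A) under $H_0$, the learned combined predictor $f_{\hat{\boldsymbol\lambda}_n}$ (line~3) is \emph{asymptotically calibrated} as $n\to\infty$; and (B) the consistency-resampling bootstrap of lines~5--19, applied to an (asymptotically) calibrated predictor, produces a critical value $c_{m,D,\alpha}$ whose coverage converges to $1-\alpha$. The two are then glued using the assumed continuity of $\boldsymbol\lambda\mapsto f_{\boldsymbol\lambda}$ and of the calibration functional $g$, together with a Slutsky-type perturbation argument, and the triple limit is taken iteratively ($n\to\infty$, then $m\to\infty$, then $D\to\infty$) before checking that the joint limit agrees.

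For part (A) I would run the classical $M$-estimation argument. Write $R(\boldsymbol\lambda)=\mathbb E[\ell(f_{\boldsymbol\lambda}(X),Y)]+\gamma\, g(f_{\boldsymbol\lambda})$ and $\widehat R_n$ for its empirical counterpart on $\mathcal D_{opt}$. First establish a uniform law of large numbers $\sup_{\boldsymbol\lambda}|\widehat R_n(\boldsymbol\lambda)-R(\boldsymbol\lambda)|\to 0$, using boundedness of the Brier score (resp. a truncation/integrability argument for the log loss), the assumed consistency and asymptotic unbiasedness of $\widehat g_n$, and the universal-approximation hypothesis, which makes the approximation error of the classes representing $\boldsymbol\lambda$ vanish, so that in the limit the minimisation effectively ranges over all of $\Delta_{M,\mathcal X}$. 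An argmin/epi-convergence argument then yields $R(\hat{\boldsymbol\lambda}_n)\to\min_{\boldsymbol\lambda}R(\boldsymbol\lambda)$ and $f_{\hat{\boldsymbol\lambda}_n}\to f_{\boldsymbol\lambda^\circ}$ for a population minimiser $\boldsymbol\lambda^\circ$. It then remains to show $g(f_{\boldsymbol\lambda^\circ})=0$. Since $\ell$ is strictly proper, $\mathbb E[\ell(f_{\boldsymbol\lambda}(X),Y)]$ equals the generalised entropy of $\mathbb P_{Y|X}$ plus the $\ell$-Bregman divergence from $\mathbb P_{Y|X}$ to $f_{\boldsymbol\lambda}(X)$, so its minimiser over $\Delta_{M,\mathcal X}$ is the pointwise $\ell$-projection of $\mathbb P_{Y|X}$ onto $\mathcal S(\mathcal F,x)$; under the stated regularity conditions this projection is itself calibrated in distribution (in particular this holds when the credal set is \emph{valid}). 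Hence $R$ is already minimised at a calibrated $\boldsymbol\lambda$, and since $\gamma g\ge 0$ every minimiser of $R$ is $\ell$-risk-optimal and therefore, by the assumed ``$g=0$ iff calibrated'' equivalence, calibrated.

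For part (B) I would invoke the validity of consistency resampling (Vaicenavicius et al.) for a \emph{fixed} calibrated $f$: relabelling $\tilde y_i\sim\mathrm{Cat}(f(x_i))$ yields a data set under which $f$ is exactly calibrated, so $t_{0,1},\dots,t_{0,D}$ are draws from the law $L_m(f)$ of $\widehat g_m(f,\cdot)$ under calibration; Glivenko--Cantelli in $D$ gives $\widehat F_{m,D}\Rightarrow L_m(f)$ and $c_{m,D,\alpha}\to q_{1-\alpha}(L_m(f))$, while under $H_0$ the statistic computed with the calibrated limit $f_{\boldsymbol\lambda^\circ}$ has exactly the law $L_m(f_{\boldsymbol\lambda^\circ})$, so that $\mathbb P(t_{m,n}\ge c)\to\alpha$ as long as the limiting null law has no atom at its $(1-\alpha)$-quantile. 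To pass from the idealised $f_{\boldsymbol\lambda^\circ}$ to the predictor actually plugged in, $f_{\hat{\boldsymbol\lambda}_n}$, I would combine $\hat{\boldsymbol\lambda}_n\to\boldsymbol\lambda^\circ$ with continuity of $\boldsymbol\lambda\mapsto f_{\boldsymbol\lambda}$ and of $g$ to show both $t_{m,n}-\widehat g_m(f_{\boldsymbol\lambda^\circ},\mathcal D_{val})\to 0$ and that the bootstrap quantile computed from $f_{\hat{\boldsymbol\lambda}_n}$ converges to that computed from $f_{\boldsymbol\lambda^\circ}$; a Polya/Slutsky argument then closes the gap.

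The main obstacle is the calibration of the population minimiser in part~(A). Existence of \emph{some} calibrated convex combination ($H_0$) does not by itself force the proper-scoring-rule-optimal combination to be calibrated: a sharper but slightly miscalibrated combination can have strictly smaller proper score, and for a fixed small $\gamma$ the penalty $\gamma g$ need not overturn this, so without further structure the test could over-reject. This is exactly what the ``regularity conditions'' on $\mathcal F$ and on $g$ must exclude, and I would state the cleanest sufficient form explicitly, namely that the $\ell$-projection of $\mathbb P_{Y|X}$ onto $\mathcal S(\mathcal F,\cdot)$ is calibrated in distribution (implied by validity of the credal set). Everything else --- the uniform convergence of the $M$-estimation objective, the continuity/perturbation bookkeeping, and the interchange of the three limits --- is routine once the continuity hypotheses and the absence of atoms in the limiting null distribution are assumed.
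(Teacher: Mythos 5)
Your plan follows essentially the same route as the paper's proof: (i) argue that the empirical minimiser of the combined objective converges to a population minimiser $\boldsymbol\lambda^\star$ (the paper invokes the arg-min consistency theorem, Thm.~5.7 of van der Vaart, where you propose the equivalent ULLN/epi-convergence machinery), (ii) claim $g(f_{\boldsymbol\lambda^\star})=0$ under $H_0$, (iii) derive the limiting law of $\sqrt{m}\,\hat g_m(f_{\boldsymbol\lambda^\star},\mathcal D_{val})$ from U-statistic asymptotics and transfer it to $t_{m,n}$ (the paper uses Rubin's theorem where you use a Slutsky/Polya perturbation argument — these play the same role), and (iv) invoke consistency-resampling bootstrap validity so that $c_{m,D,\alpha}$ converges to the $(1-\alpha)$-quantile of the limit law. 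The one substantive point where you diverge is the step you flag as the ``main obstacle'': the paper dispatches the calibration of the population minimiser in a single sentence, asserting that because $\mathbb E[\ell]$ decomposes into calibration plus refinement and $g$ vanishes on calibrated predictors, the minimiser must satisfy $g(f_{\boldsymbol\lambda^\star})=0$. As you correctly observe, this does not follow from strict propriety alone when the minimisation is constrained to $\Delta_{M,\mathcal X}$: a sharper but slightly miscalibrated combination can achieve a strictly smaller combined risk for fixed small $\gamma$, so the existence of \emph{some} calibrated combination in the credal set does not force the risk minimiser to be calibrated. The paper hides this inside its unspecified ``regularity conditions,'' whereas your explicit sufficient condition — that the $\ell$-projection of $\mathbb P_{Y|X}$ onto $\mathcal S(\mathcal F,\cdot)$ is itself calibrated, which holds in particular under validity of the credal set — is a cleaner and more honest formulation of what is actually being assumed. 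In short: same architecture, but your version makes precise the hypothesis that the paper's step~1 silently requires.
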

\begin{proof}
In the following, we assume that the null hypothesis is true, i.e., there exists (at least) one underlying function $\boldsymbol{\lambda}$ such that the combined predictor $f_{\boldsymbol{\lambda}}$ is calibrated.\\
1. Consistency:
Because $\ell$ is strictly proper,
the population level risk in Eq.\ (\ref{eq: combined calibration loss}) is uniquely minimised by some 
$\boldsymbol\lambda^\star\!\in\!\Delta_{M,\mathcal{X}}$ with respective combined predictor $f_{\boldsymbol\lambda^\star}$ . As the expectation of $\ell$ can be decomposed into a calibration and refinement term, and $g(f) = 0$ for any calibrated predictor $f$, we have that $g(f_{\boldsymbol\lambda^\star})=0$.

By the arg-min consistency theorem (Thm. $5.7$ of \cite{van2000asymptotic})
it holds that the empirical risk minimizer converges to the population level one, i.e.\
$\|\hat{\boldsymbol\lambda}_{n}-\boldsymbol\lambda^\star\|_{L_1}
  \rightarrow0$
as $n\!\to\!\infty$, and hence we have $\|f_{\hat{\boldsymbol\lambda}_{n}}-f_{\boldsymbol\lambda^*}\|_{L_1}\rightarrow0.$

2. Asymptotic distribution:
Under the null hypothesis, for each of the calibration errors we have $g(f_{\boldsymbol{\lambda}^*})=0$.
Further, taking the asymptotic behavior of U-statistics \cite{van2000asymptotic} into account,
 we have $\sqrt{m}\,\hat{g}_m(f_{\boldsymbol{\lambda}^*}, \mathcal{D}_{val}) \rightarrow Z$, $m\rightarrow \infty$,
  for some random variable with limiting distribution $F_Z$. By Rubin's theorem (see Theorem 7.9 in \cite{duembgen2017empirische}) it holds that $\sqrt{m}\, t_{m,n} \rightarrow Z$, $m,n\rightarrow\infty$.

3. Bootstrap validity: For sake of simplicity, set $D=m$. Under $H_0$, the consistency-bootstrap (line $5$ to $19$ of Algorithm \ref{alg: Alg v2.0}, \cite{vaicenavicius19a}) guarantees that conditionally on the validation data set it holds that $\sqrt{m} \, c_{m,D,\alpha}\rightarrow F_Z^{-1}(1-\alpha).$ This is because $\sup_t  |\hat F_{m,D}^{-1}(t)-F_{t_{m,n}}^{-1}(t)|\rightarrow 0 $ (compare to Section 23.2.1 in \cite{van2000asymptotic}).

4. Control of Type $1$ error: The convergence of the bootstrap quantile
against the quantile of the limiting distribution implies
$\mathbb{I}\{t_{m,n}\geq c_{m,D,\alpha}\}\rightarrow
   \mathbb{I}\{Z\geq F_Z^{-1}(1-\alpha)\}.
$ Taking expectations yields:
$\mathbb{P}(\text{reject}| H_0) = \mathbb{P}(t_{m,n} \geq c_{m.B. \alpha}) \rightarrow 1-F_Z(F_Z^{-1}(1-\alpha))=\alpha.$
\end{proof}

\section{Uniform sampling in the credal set}
\label{sec: uniform sampling}
 \begin{figure}[t]
        \centering
        \subfloat[]{\includegraphics[width=0.25\textwidth, height= 0.16\textheight]{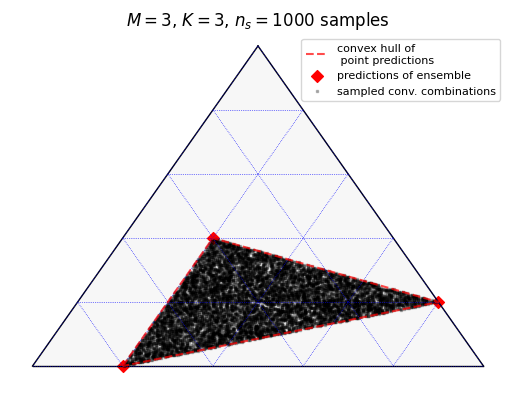}
        }
        \subfloat[]{\includegraphics[width=0.25\textwidth, height=0.16\textheight]{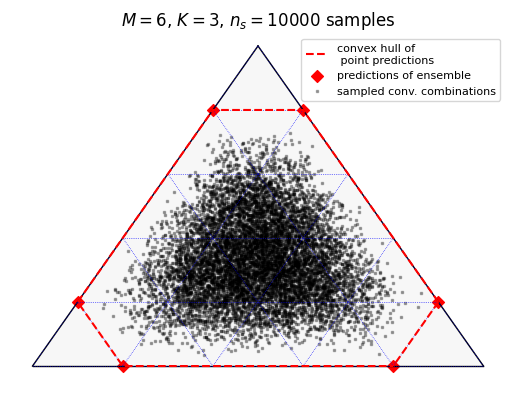}}
         \subfloat[]{\includegraphics[width=0.25\textwidth, height = 0.16\textheight]{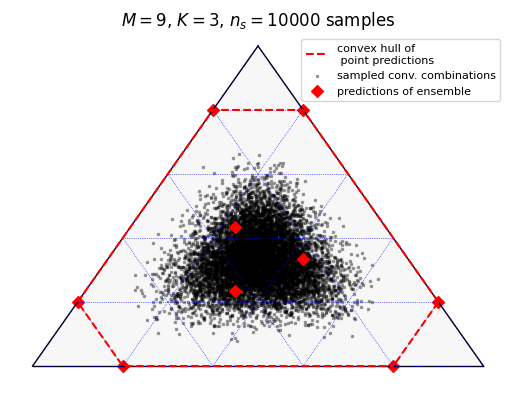}}
         \hfill
        \subfloat[]{\includegraphics[width=0.25\textwidth, height=0.16\textheight]
        {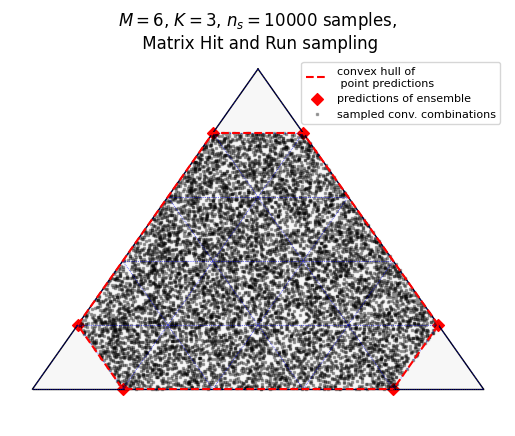}} 
        \subfloat[]{\includegraphics[width=0.25\textwidth, height=0.16\textheight]{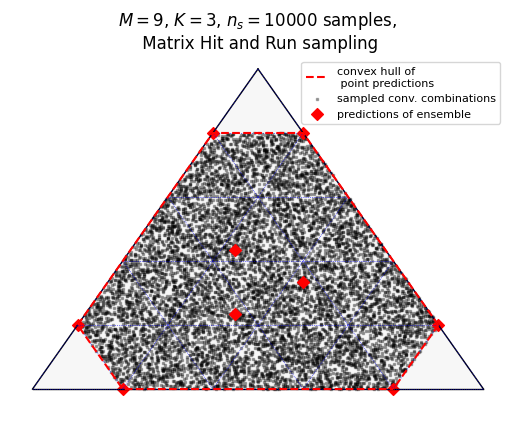}}
        \captionsetup{width=\linewidth}
        \caption{Sampling example for $K=3$: The point predictions $f^{(i)}(x)$ are visualized as red dots on the corners and/or 
        the inside of the $2$-simplex. In (a) and (b), and (c) samples $f_{\boldsymbol{\lambda}_1}, \dots, f_{\boldsymbol{\lambda}_n}$ are generated by sampling the weights of the convex combination from the uniform Dirichlet distribution, $\boldsymbol{\lambda}_i \sim Dir(1, \dots, M)$. In
        in (d) and (e), an MCMC approach is used to uniformly sample in the polytope of convex combinations.}
        \label{fig: simplices}
    \end{figure}
In the work of \citet{mortier2023calibration}, a sampling within the polytope of convex combinations of predictions is done by sampling weights $\boldsymbol{\lambda} \in \Delta_M\sim Dir(1, \dots, 1)$. This however does in general not lead to a uniform sampling within the polytope, as is exemplary shown in Figure \ref{fig: simplices}. As soon as we have more than $M=3$ predictors, the sampled predictions start "accumulating" in the centre of the simplex, due to the fact that there are more possible combinations in the centre than for the regions in the boundary. Possible ways to achieve uniform sampling is by using triangulization methods, rejection sampling or Markov Chain Monte Carlo sampling methods, however, these methods come with increased computational effort.

\section{Experimental setup}
\label{sec: experimental setup}
 In this section we explain the experimental setup for the experiments conducted in Section \ref{sec: experimental results}. 
\subsection{Synthetic data}
\begin{figure}[t]
    \centering
    \includegraphics[width=.7\textwidth]{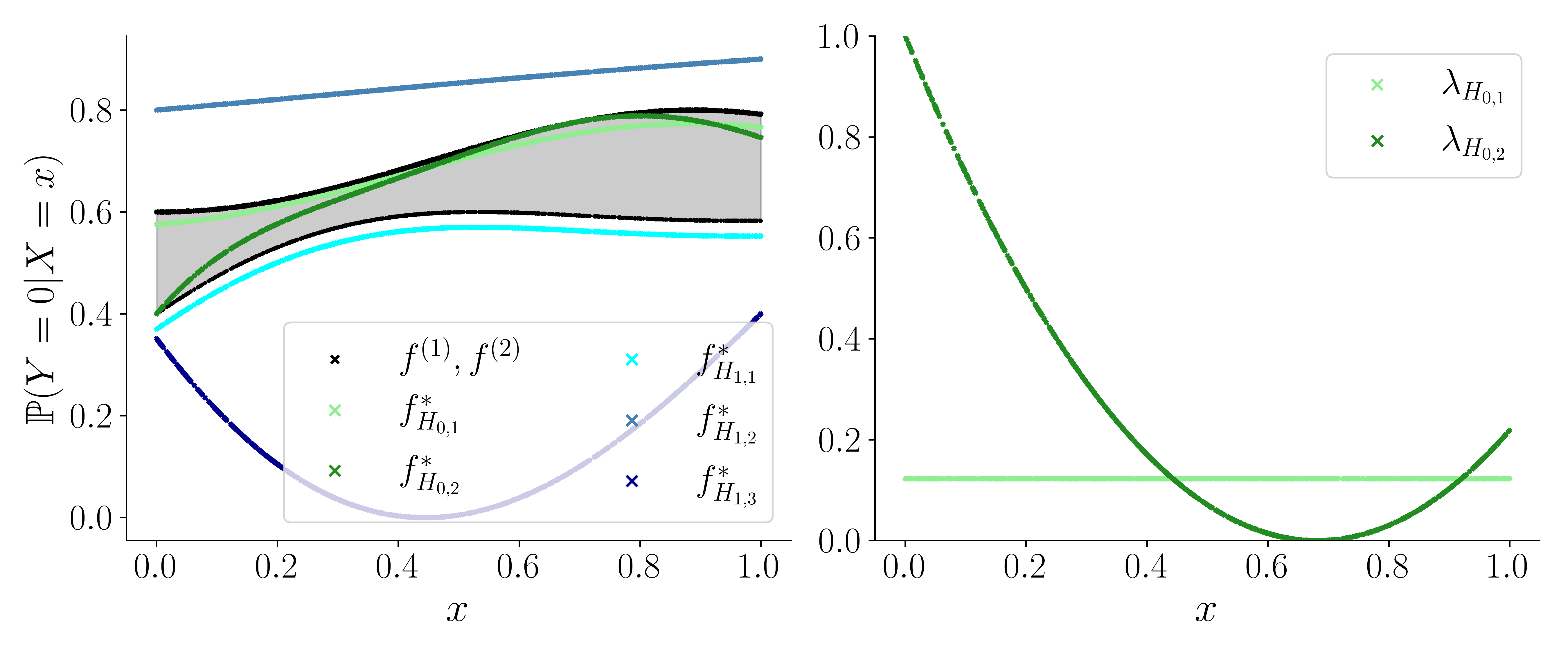}
\caption{$1$-simplex spanned by the ensemble members $\{f^{(1)}, f^{(2)}\}$ (grey) that are generated according to the binary classification setting as described in Section \ref{sec: binary classification}, and the underlying ground truth calibrated predictions  for test cases $H_{0,1}$ - $H_{1,3}$ described in Section \ref{sec: binary classification} (left). In the right graphic, the respective weights of the underlying convex combination for the cases $H_{0,1}$ (constant) and $H_{0,2}$ (instant-dependent polynomial) are visualised as a function of the instance space.
     }
    \label{fig: gp calibration cases}
\end{figure}
For optimising the weights $\boldsymbol{\lambda}$ in the synthetic experiments, we use an MLP with $3$ hidden layers each consisting of $16$ neurons that is trained on an optimization dataset of size $N_{opt}=400$. Both tests (the previous one proposed by Mortier et al.\ and ours) are performed on a validation set set of size $N_{val}=400$.\\

For the case of \textbf{binary classification}, we draw $\{x_i\}_{i=1}^N \sim U([0,5])$, and generate predictions from two probabilistic predictors $\{f^{(1)}(x), f^{(2)}(x)\}$ by sampling from a Gaussian process with an rbf kernel, whose outputs are constrained to output probabilities $[0,1]$, using min-max sclaing. Under $H_0$, we generate the probability for class $1$, $f^*(x) = \lambda^*\,f^{(1)}(x) + (1-\lambda^*)\,f^{(2)}(x)$ with $\lambda^*$ generated as \begin{itemize}
    \item $\boldsymbol{H_{0,1}}$: $\lambda^*$ is a constant and sampled as $\lambda^* \sim U(0,1)$,
    \item $\boldsymbol{H_{0,2}}$: $\lambda^*: \mathcal{X}\rightarrow [0,1]$ is a polynomial of degree $D$. In the experiments, we set $D=2$.
\end{itemize}
Under $H_1$, $f^*(x)$ lies outside the credal set with \begin{itemize}
    \item $\boldsymbol{H_{1,1}}$: $f^*(x)$ lies at an $\epsilon$-distance to one of the boundaries of the credal set. We sample $\epsilon \sim U([0,0.02])$.
    \item $\boldsymbol{H_{1,2}}-\boldsymbol{H_{1,3}}$ reflect increasing distance to the boundary by sampling a new GP that intentionally remains outside the credal set.
\end{itemize} 

For the \textbf{multi-class classification} case, we again draw $\{x_i\}_{i=1}^N \sim U([0,5])$ and generate the set of probabilistic predictors $f^{(1)}, \dots, f^{(M)}\}$ as follows: For each $x_i$, draw a prior $\boldsymbol{p}\in \mathbb{R}^K \sim Dir(1, \dots, 1)$, then sample $f^{(m)}(x_i) \sim Dir(\frac{\boldsymbol{p} \cdot K}{0.5})$. Under $H_0$, the weight function $\boldsymbol{\lambda}^*$ is generated as \begin{itemize}
    \item $\boldsymbol{H_{0,1}}$: $\boldsymbol{\lambda}^*\sim Dir(1, \dots, 1)$
    \item $\boldsymbol{H_{0,2}}$: $\boldsymbol{\lambda}^*=(\lambda_1^*, \dots, \lambda_M^*)$ with $\lambda_m^* : \mathcal{X} \rightarrow [0,1]$; the components of $\boldsymbol{\lambda}^*$ are randomly generated and scaled polynomials of degree $D=2$.
\end{itemize}
Under $H_1$, $f^c(x)$ is randomly chosen as one of the corners of the simplex, outside of the convex set. $f^*(x)$ is then given by the point prediction on the line segment between $f^c(x)$ and the boundary point $f^b(x)$ in the credal set, $f^*(x)(x) = \delta f^c(x) + (1-\delta)\,f^b(x)$, with \begin{itemize}
    \item $\boldsymbol{H_{0,1}}$: $\delta=0.01$,
    \item $\boldsymbol{H_{0,2}}$: $\delta=0.1$,
    \item $\boldsymbol{H_{0,3}}$: $\delta=0.2$.
\end{itemize}
In both settings, the labels $\{(y_i)\}_{i=1}^N$ are then sampled from the resulting categorical distribution with parameter $f^*(x_i)$.
\subsection{Real Data}
Here we provide details on the data preprocessing, model architectures, hyper-parameters, and calibration-test implementation used in the real-world experiments.\\

\noindent\textbf{Data splitting:}
We split the data into training set, (used to train the ensembles), validation and test set. The validation set together with the predictions of the models are then used to optimise for $\boldsymbol{\lambda}^*$, and the test is performed on the test data. 
\begin{itemize}
    \item CIFAR-$10$: 50000 training, 5000 validation and 5000 test samples,
    \item CIFAR-$100$: 60000 training, 5000 validation and 5000 test samples.
\end{itemize}
\textbf{Models}: We use the following training parameters to train the deep ensemble and dropout models, in order to obtain the credal set of predictions: \begin{itemize}
    \item adam optimizer with $\text{lr}=0.001$,
    \item loss function: cross-entropy (log loss)
    \item batch size: $128$
    \item number of epochs: $50$
    \item early stopping: monitor validation loss, stop if does not decrease for $\text{patience}=10$ epochs.
\end{itemize}
\textbf{Optimization of weights}: We concatenate a small MLP consisting of $3$ hidden layers and $32$ neurons to the same VGG-19/ResNet-18 architecture that has been used for learning the prediction sets. For optimization, we use the combined loss as in (\ref{eq: combined calibration loss}) with \begin{itemize}
    \item log loss and $\widehat{\text{CE}}_{KL}$ with $\gamma=0.01$ for testing $\text{CE}_{KL}$
    \item Brier score and $\widehat{\text{CE}}_2$ with $\gamma=0.01$ for testing $\text{CE}_2$.
\end{itemize}
As optimisation parameters, we use a learning rate $\text{lr}=0.0001$, a $\text{batch size} = 256$, the adam optimizer and train the neural network for $200$ epochs. 

\end{appendices}
\end{document}